\newtheorem{theorem}{Theorem}
\newtheorem{lemma}[theorem]{Lemma}
\newtheorem{problem}[theorem]{Problem}
\newtheorem{remark}[theorem]{Remark}
\newtheorem{example}[theorem]{Example}
\newtheorem{question}[theorem]{Question}
\providecommand*{\cupdot}{%
  \mathbin{%
    \mathpalette\@cupdot{}%
  }%
}
\newcommand*{\@cupdot}[2]{%
  \ooalign{%
    $\m@th#1\cup$\cr
    \hidewidth$\m@th#1\cdot$\hidewidth
  }%
}
\icmltitlerunning{Graph Homomorphism Convolution}
\begin{document}

\twocolumn[
\icmltitle{Graph Homomorphism Convolution}

\icmlsetsymbol{equal}{*}

\begin{icmlauthorlist}
\icmlauthor{Hoang NT}{riken,titech}
\icmlauthor{Takanori Maehara}{riken}
\end{icmlauthorlist}

\icmlaffiliation{riken}{RIKEN Center for Advanced Intelligence Project, Tokyo, Japan}
\icmlaffiliation{titech}{Tokyo Institute of Technology, Tokyo, Japan}

\icmlcorrespondingauthor{Hoang NT}{me@gearons.org}

\icmlkeywords{Graph Learning, Graph Classification, Graph Homomorphism, Machine Learning, ICML}

\vskip 0.3in
]



\printAffiliationsAndNotice{}  

\begin{abstract}
In this paper, we study the graph classification problem from the graph homomorphism perspective. 
We consider the homomorphisms from $F$ to $G$, where $G$ is a graph of interest (e.g. molecules or social networks) and $F$ belongs to some family of graphs (e.g. paths or non-isomorphic trees). 
We show that graph homomorphism numbers provide a natural invariant (isomorphism invariant and $\mathcal{F}$-invariant) embedding maps which can be used for graph classification.
Viewing the expressive power of a graph classifier by the $\mathcal{F}$-indistinguishable concept, we prove the universality property of graph homomorphism vectors in approximating $\mathcal{F}$-invariant functions.
In practice, by choosing $\mathcal{F}$ whose elements have bounded tree-width, we show that the homomorphism method is efficient compared with other methods.
\end{abstract}

\section{Introduction}\label{sec:intro}

\subsection{Background}

In many fields of science, objects of interest often exhibit irregular structures. 
For example, in biology or chemistry, molecules and protein interactions are often modeled as graphs~\cite{milo2002network,benson2016higher}.
In multi-physics numerical analyses, methods such as the finite element methods discretize the sample under study by 2D/3D-meshes~\cite{mezentsev2004generalized,fey2018spline}. 
In social studies, interactions between people are presented as a social network~\cite{barabasi2016network}. 
Understanding these irregular non-Euclidean structures have yielded valuable scientific and engineering insights.
With recent successful developments of machine learning on regular Euclidean data such as images, a natural extension challenge arises: \emph{How do we learn non-Euclidean data such as graphs or meshes?}

Geometric (deep) learning \cite{bronstein2017geometric} is an important extension of machine learning as it generalizes learning methods from Euclidean data to non-Euclidean data. 
This branch of machine learning not only deals with learning irregular data but also provides a proper means to combine meta-data with their underlying structure. 
Therefore, geometric learning methods have enabled the application of machine learning to real-world problems: From categorizing complex social interactions to generating new chemical molecules.
Among these methods, graph-learning models for the classification task have been the most important subject of study.

Let $\mathcal{X}$ be the space of features (e.g., $\mathcal{X} = \mathbb{R}^d$ for some positive integer $d$), $\mathcal{Y}$ be the space of outcomes (e.g., $\mathcal{Y} = \{ 0, 1 \}$), and $G = (V(G), E(G))$ be a graph with a vertex set $V(G)$ and edge set $E(G) \subseteq V(G) \times V(G)$. The graph classification problem is stated follow\footnote{This setting also includes the regression problem.}.
\begin{problem}[Graph Classification Problem] 
\label{prob:gclf}
\label{prob:graph-learning-problem}
We are given a set of tuples $\{ (G_i, x_i, y_i) : i = 1, \dots, N \}$ of graphs $G_i = (V(G_i), E(G_i))$, vertex features $x_i \colon V(G_i) \to \mathcal{X}$, and outcomes $y_i \in \mathcal{Y}$.
The task is to learn a hypothesis
$h$ such that $h((G_i, x_i)) \approx y_i$.
\footnote{$h$ can be a machine learning model with a given training set.}
\end{problem}

Problem~\ref{prob:gclf} has been studied both theoretically and empirically.
Theoretical graph classification models often discuss the universality properties of some targeted function class.
While we can identify the function classes which these theoretical models can approximate, practical implementations pose many challenges. For instance, the tensorized model proposed by \cite{keriven2019universal} is universal in the space of continuous functions on bounded size graphs, but it is impractical to implement such a model.
On the other hand, little is known about the class of functions which can be estimated by some practical state-of-the-art models.
To address these disadvantages of both theoretical models and practical models, we need a practical graph classification model whose approximation capability can be parameterized.
Such a model is not only effective in practice, as we can introduce inductive bias to the design by the aforementioned parameterization, but also useful in theory as a framework to study the graph classification problem.


In machine learning, a model often introduces a set of assumptions, which is known as inductive bias.
These assumptions help narrow down the hypothesis space while maintaining the validity of the learning model subject to the nature of the data.
For example, a natural inductive bias for graph classification problems is the invariant to the permutation property \cite{maron2018invariant,sannai2019universal}.
We are often interested in a hypothesis $h$ that is invariant to isomorphism, i.e., for two isomorphic graphs $G_1$ and $G_2$ the hypothesis $h$ should produce the same outcome, $h(G_1) = h(G_2)$.
Therefore, it is reasonable to restrict our attention to only invariant hypotheses.
More specifically, we focus on invariant embedding maps because we can construct an invariant hypothesis by combining these mappings with any machine learning model designed for vector data.
Consider the following research question:

\begin{question}
\label{q:main_question}
How to design an efficient and invariant embedding map for the graph classification problem?
\end{question}

\subsection{Homomorphism Numbers as a Classifier}

A common approach to Problem~\ref{prob:graph-learning-problem} is to design an \emph{embedding}\footnote{Not to be confused with ``vertex embedding''.} $\rho \colon (G, x) \mapsto \rho((G, x)) \in \mathbb{R}^p$, which maps graphs to vectors, where $p$ is the dimensionality of the representation.
Such an embedding can be used to represent a hypothesis for graphs as $h((G, x)) = g(\rho((G, x))$ by some hypothesis $g \colon \mathbb{R}^p \to \mathcal{Y}$ for vectors.
Because the learning problem on vectors is a well-studied problem, we can focus on designing and understanding graph embedding.

We found that using homomorphism numbers as an invariant embedding is not only theoretically valid but also extremely efficient in practice.
In a nutshell, the embedding for a graph $G$ is given by selecting $k$ pattern graphs to form a fixed set $\mathcal{F}$, then computing the homomorphism numbers from each $F \in \mathcal{F}$ to $G$. 
The classification capability of the homomorphism embedding is parameterized by $\mathcal{F}$.
We develop rigorous analyses for this idea in Section~\ref{sec:graph_no_features} (without vertex features) and Section~\ref{sec:graph_with_features} (with vertex features).

Our contribution is summarized as follows:
\vspace{-1em}
\begin{itemize}
    \setlength\itemsep{0em}
    \item Introduce and analyze the usage of weighted graph homomorphism numbers with a general choice of $\mathcal{F}$. The choice of $\mathcal{F}$ is a novel way to parameterize the capability of graph learning models compared to choosing the tensorization order in other related work.
    \item Prove the universality of the homomorphism vector in approximating $\mathcal{F}$-indistinguishable functions. Our main proof technique is to check the condition of the Stone-Weierstrass theorem. 
    \item Empirically demonstrate our theoretical findings with synthetic and benchmark datasets. Notably, we show that our methods perform well in graph isomorphism test compared to other machine learning models. 
\end{itemize}

In this paper, we focus on simple undirected graphs without edge weights for simplicity. The extension of all our results to directed and/or weighted graphs is left as future work.

\subsection{Related Works}

There are two main approaches to construct an embedding: \emph{graph kernels} and \emph{graph neural networks}.
In the following paragraphs, we introduce some of the most popular methods which directly related to our work.
For a more comprehensive view of the literature, we refer to surveys on graph neural networks~\cite{wu2019comprehensive} and graph kernels~\cite{gartner2003survey,kriege2019survey}.

\subsubsection{Graph Kernels}

The kernel method first defines a kernel function on the space, which implicitly defines an embedding $\rho$ such that the inner product of the embedding vectors gives a kernel function.
Graph kernels implement $\rho$ by counting methods or graph distances (often exchangeable measures).
Therefore, they are isomorphism-invariant by definition.

The graph kernel method is the most popular approach to study graph embedding maps. Since designing a kernel which uniquely represents graphs up to isomorphisms is as hard as solving graph isomorphism \cite{gartner2003graph}, many previous studies on graph kernels have focused on proposing a solution to the trade-off between computational efficiency and representability. A natural idea is to compute subgraph frequencies \cite{gartner2003graph} to use as graph embeddings. However, counting subgraphs is a \#W[1]-hard problem~\cite{flum2006parameterized} and even counting induced subgraphs is an NP-hard problem (more precisely it is an \#A[1]-hard problem~\cite{flum2006parameterized}).
Therefore, methods like the tree kernel \cite{collins2002convolution,mahe2009graph} or the random walk kernel \cite{gartner2003graph,borgwardt2005protein} restrict the subgraph family to be some computationally efficient graphs. 
Regarding graph homomorphism, \citeauthor{gartner2003graph} and also \citeauthor{mahe2009graph} studied a relaxation which is similar to homomorphism counting (walks and trees). 
Especially, \citeauthor{mahe2009graph} showed that the tree kernel is efficient for molecule applications.  
However, their studies limit to tree kernels and it is not known to what extend these kernels can represent graphs. 

More recently, the graphlet kernel \cite{shervashidze2009efficient,prvzulj2004modeling} and the Weisfeiler-Lehman kernel \cite{shervashidze2011weisfeiler,kriege2016valid} set the state-of-the-art for benchmark datasets~\cite{tud}. 
Other similar kernels with novel modifications to the distance function, such as Wasserstein distance, are also proposed~\cite{togninalli2019wasserstein}. 
While these kernels are effective for benchmark datasets, some are known to be not universal~\cite{gin,keriven2019universal} and it is difficult to address their expressive power to represent graphs.

\subsubsection{Graph Neural Networks}

\begin{figure}
\begin{center}
   \includegraphics[width=0.9\linewidth]{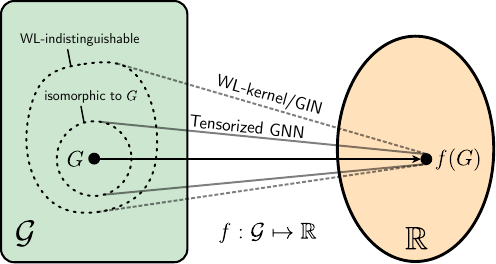}
\end{center}
   \caption{A visualization of Graph Neural Networks' expressive power.
   An ``ideal'' GNN, for instance the tensorized GNN by
   \citeauthor{keriven2019universal}, maps graphs that are isomorphic to $G$ to $f(G)$.
   In contrast, WL-kernel~\cite{shervashidze2011weisfeiler} and (ideal) GIN~\cite{gin} is limited by the WL-indistinguishable set so they might map graphs which are non-isomorphic to $G$ to $f(G)$.}
\label{fig:gfunc}
\vspace{-1em}
\end{figure}

Graph Neural Networks refers to a new class of graph classification models in which the embedding map $\rho$ is implemented by a neural network. In general, the mapping $\rho$ follows an aggregation-readout scheme~\cite{graphsage,gilmer2017neural,gin,gntk} where vertex features are aggregated from their neighbors and then read-out to obtain the graph embedding. Empirically, especially on social network datasets, these neural networks have shown better accuracy and inference time than graph kernels. However, there exist some challenging cases where these practical neural networks fail, such as Circular Skip Links synthetic data~\cite{murphy19pooling} or bipartite classification (Section~\ref{sec:exp}).

Theoretical analysis of graph neural networks is an active topic of study. The capability of a graph neural network has been recently linked to the Weisfeiler-Lehman isomorphism test~\cite{morris2019weisfeiler,gin}. 
Since \citeauthor{morris2019weisfeiler} and \citeauthor{gin} proved that the aggregation-readout scheme is bounded by the one-dimensional Weisfeiler-Lehman test, much work has been done to quantify and improve the capability of graph neural networks via the tensorization order.
Another important aspect of graph neural networks is their ability to approximate graph isomorphism equivariant or invariant functions~\cite{maron2018invariant,maron2019universality,keriven2019universal}.
Interestingly, \citeauthor{chen2019equivalence} showed that isomorphism testing and function approximation are equivalent.

The advantage of tensorized graph neural networks lies in their expressive power.
However, the disadvantage is that the tensorization order makes it difficult to have an intuitive view of the functions which need to be approximated.
Furthermore, the empirical performance of these models might heavily depends on initialization~\cite{chen2019equivalence}.

Figure~\ref{fig:gfunc} visualizes the interaction between function approximation and isomorphism testing.
An ideal graph neural network $f$ maps only $G$ and graphs isomorphic to it to $f(G)$.
On the other hand, efficient implementation of $f$ can only maps some $\mathcal{F}$-indistinguishable of $G$ to $f(G)$.
This paper shows that graph homomorphism vectors with some polynomials universally approximate $\mathcal{F}$-invariant functions. 





\section{Graphs without Features}
\label{sec:graph_no_features}

We first establish our theoretical framework for graphs without vertex features.
Social networks are often feature-less graphs, in which only structural information (e.g. hyperlinks, friendships, etc.) is captured.
The main result of this section is to show that using homomorphism numbers with some polynomial not only yields a universally invariant approximator but that we can also select the pattern set $\mathcal{F}$ for some targeted applications.

\subsection{Definition}

An (undirected) \emph{graph} $G = (V(G), E(G))$ is \emph{simple} if it has neither self-loops nor parallel edges.
We denote by $\mathcal{G}$ the set of all simple graphs.

Let $G$ be a graph.
For a finite set $U$ and a bijection $\sigma \colon V(G) \to U$, we denote by $G^\sigma$ the graph defined by $V(G^\sigma) = U$ and $E(G^\sigma) = \{ (\sigma(u), \sigma(v)) : (u, v) \in E(G) \}$.
Two graphs $G_1$ and $G_2$ are \emph{isomorphic} if $G_1^\sigma = G_2$ for some bijection $\sigma \colon V(G_1) \to V(G_2)$.

\subsection{Homomorphism Numbers}

Here, we introduce the homomorphism number.
This is a well-studied concept in graph theory~\cite{hell2004graphs,lovasz2012large} and plays a key role in our framework.

Let $F$ and $G$ be undirected graphs.
A \emph{homomorphism from $F$ to $G$} is a function $\pi \colon V(F) \to V(G)$ that preserves the existence of edges, i.e., $(u, v) \in E(F)$ implies $(\pi(u), \pi(v)) \in E(G)$.
We denote by $\mathrm{Hom}(F, G)$ the set of all homomorphisms from $F$ to $G$.
The \emph{homomorphism number} $\mathrm{hom}(F, G)$ is the cardinality of the homomorphisms, i.e., $\mathrm{hom}(F, G) = |\mathrm{Hom}(F, G)|$. 
We also consider the \emph{homomorphism density} $t(F, G)$.
This is a normalized version of the homomorphism number:
\begin{align}
\label{eq:hom-density-graph}
    t(F, G) &= \frac{\mathrm{hom}(F, G)}{|V(G)|^{|V(F)|}} \\
    &= \sum_{\pi: V(F) \to V(G)} \prod_{u \in V(F)} \frac{1}{|V(G)|} \nonumber \\ & \times \prod_{(u,v) \in E(F)} 1[ (\pi(u), \pi(v)) \in E(G) ],
\label{eq:hom-density-graph-2ndline}
\end{align}
where $1[ \cdot ]$ is the Iverson braket. 
Eq.~\eqref{eq:hom-density-graph-2ndline} can be seen as the probability that randomly sampled $|V(F)|$ vertices of $V(G)$ preserve the edges of $E(F)$.
Intuitively, a homomorphism number $\mathrm{hom}(F, G)$ aggregates local connectivity information of $G$ using a pattern graph $F$.

\begin{example}
\label{ex:singleton}
Let $\circ$ be a single vertex, we have $\mathrm{hom}(\circ, G) = |V(G)|$ and $\mathrm{hom}(\begin{tikzpicture}
\node[draw,circle,inner sep=1.3pt] at (0,0) (a) {};
\node[draw,circle,inner sep=1.3pt] at (0.3,0) (b) {};
\draw[-] (a)--(b);\end{tikzpicture}, G) = 2 |V(E)|$. 
\end{example}
\begin{example}
\label{ex:stars}
Let $S_k$ be the star graph of size $k+1$.
Then, $\mathrm{hom}(S_k, G) \propto \sum_{u \in V(G)} d(u)^k$, where $d(u)$ is the degree of vertex $u$.
\end{example}
\begin{example}
\label{ex:cycles}
We have: $\mathrm{hom}(C_k, G) \propto \mathrm{tr}(A^k)$, where $C_k$ is a length $k$ cycle and $A$ is the adjacency matrix of $G$.
\end{example}


It is trivial to see that the homomorphism number is invariant under isomorphism.
Surprisingly, the converse holds as homomorphism numbers identify the isomorphism class of a graph.
Formally, we have the following theorem.
%
\begin{theorem}[\cite{lovasz1967operations}]
\label{thm:lovasz1967operations}
Two graphs $G_1$ and $G_2$ are isomorphic if and only if $\mathrm{hom}(F, G_1) = \mathrm{hom}(F, G_2)$ for all simple graphs $F$.
In addition, if $|V(G_1)|, |V(G)_2| \le n$ then we only have to examine $F$ with $|V(F)| \le n$.
\end{theorem}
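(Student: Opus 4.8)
The plan is to prove the two directions separately; the forward implication is immediate, and the converse carries the content. For the ``only if'' direction, if $\sigma \colon V(G_1) \to V(G_2)$ is an isomorphism, then $\pi \mapsto \sigma \circ \pi$ is a bijection between $\mathrm{Hom}(F, G_1)$ and $\mathrm{Hom}(F, G_2)$ for every pattern $F$, so all homomorphism numbers agree.

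For the ``if'' direction, the key idea is to pass from homomorphism counts to \emph{injective} homomorphism counts $\mathrm{inj}(F, G)$. Every homomorphism $\pi \colon V(F) \to V(G)$ factors through the partition $P$ of $V(F)$ induced by its fibers, becoming injective on the quotient graph $F/P$; conversely, every injective homomorphism of a quotient lifts uniquely back. This gives the identity $\mathrm{hom}(F, G) = \sum_{P} \mathrm{inj}(F/P, G)$, where $P$ ranges over partitions of $V(F)$ (terms whose blocks contain an edge of $F$ vanish, since the simple graph $G$ has no self-loops). I would then apply Möbius inversion over the partition lattice to solve for $\mathrm{inj}(F, G)$ as an integer linear combination of the numbers $\mathrm{hom}(F/P, G)$, with coefficients given by the lattice's Möbius function and hence independent of $G$. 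Since every quotient $F/P$ has at most $|V(F)|$ vertices, the hypothesis $\mathrm{hom}(F, G_1) = \mathrm{hom}(F, G_2)$ for all $F$ with $|V(F)| \le n$ transfers to $\mathrm{inj}(F, G_1) = \mathrm{inj}(F, G_2)$ for all such $F$.

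To finish, the single-vertex pattern $\circ$ yields $|V(G_1)| = \mathrm{hom}(\circ, G_1) = \mathrm{hom}(\circ, G_2) = |V(G_2)| =: m \le n$ (Example~\ref{ex:singleton}). Taking $F = G_1$, which has $m \le n$ vertices, observe that $\mathrm{inj}(G_1, G_1) = |\mathrm{Aut}(G_1)| \ge 1$, because an injective self-homomorphism of a finite graph is a vertex bijection that injects edges into edges and, the two edge sets being finite of equal size, therefore permutes them, hence preserves non-edges and is an automorphism. Consequently $\mathrm{inj}(G_1, G_2) = \mathrm{inj}(G_1, G_1) \ge 1$, so there is an injective homomorphism $G_1 \to G_2$; symmetrically there is one $G_2 \to G_1$. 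As $|V(G_1)| = |V(G_2)| = m$, each is a vertex bijection, so $|E(G_1)| \le |E(G_2)|$ and $|E(G_2)| \le |E(G_1)|$, forcing $|E(G_1)| = |E(G_2)|$; the same edge-counting argument as above then upgrades either map to an isomorphism.

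The main obstacle is the hom-to-inj inversion: one must set up the quotient/partition-lattice bookkeeping carefully, confirm that quotients identifying adjacent vertices contribute zero, and verify that the Möbius coefficients are well-defined integers independent of $G$ while every quotient stays within the vertex bound. The concluding steps — that a finite injective endomorphism is an automorphism and that a bijective homomorphism with matching edge count preserves non-edges — are elementary.
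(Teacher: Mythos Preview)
Your proof is correct and follows the classical argument. Note, however, that the paper does not supply its own proof of this theorem: it is stated with the attribution \cite{lovasz1967operations} and invoked as a known result, so there is nothing in the paper to compare against. The route you sketch --- expressing $\mathrm{hom}(F,G)$ as a sum of $\mathrm{inj}(F/P,G)$ over partitions $P$ of $V(F)$, inverting via the M\"obius function of the partition lattice to recover $\mathrm{inj}$ from $\mathrm{hom}$, and then using $\mathrm{inj}(G_1,G_1)=|\mathrm{Aut}(G_1)|\ge 1$ together with the equal vertex count to force an isomorphism --- is precisely the standard proof going back to Lov\'asz; see, e.g., Chapter~5 of \cite{lovasz2012large}. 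Your handling of the bookkeeping (quotients with a looped block contribute zero because $G$ is simple; M\"obius coefficients are integers independent of $G$; all quotients stay within the vertex bound $n$) is accurate, and the finishing edge-counting argument is sound.
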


\subsection{Homomorphism Numbers as Embeddings}

The isomorphism invariance of the homomorphism numbers motivates us to use them as the embedding vectors for a graph.
Since examining all graphs will be impractical (i.e. $\mathcal{F} = \mathcal{G}$), we select a subset $\mathcal{F} \subseteq \mathcal{G}$ as a parameter for the graph embedding.
We obtain the embedding vector of a graph $G$ by stacking the the homomorphism numbers from $F \in \mathcal{F}$. When $\mathcal{F} = \mathcal{G}$, this is known as the Lov\'{a}sz vector.
$$\mathrm{hom}(\mathcal{F}, G) = [\, \mathrm{hom}(F, G) : F \in \mathcal{F} \,] \in \mathbb{R}^{|\mathcal{F}|}.$$
We focus on two criterion: Expressive capability and computational efficiency.
Similar to the kernel representability and efficiency trade-off,
a more expressive homomorphism embedding map is usually less efficient and vice versa.

Graphs $G_1$ and $G_2$ are defined to be \emph{$\mathcal{F}$-indistinguishable} if $\mathrm{hom}(F, G_1) = \mathrm{hom}(F, G_2)$ for all $F \in \mathcal{F}$~\cite{boker2019complexity}.
Theorem~\ref{thm:lovasz1967operations} implies that the $\mathcal{F}$-indistinguishability generalizes graph isomorphism.
For several classes $\mathcal{F}$, the interpretation of $\mathcal{F}$-indistinguishability is studied; the results are summarized in Table~\ref{tab:F-indistinguishable}.
The most interesting result is the case when $F \in \mathcal{F}$ has treewidth\footnote{Smaller treewidth implies the graph is more ``tree-like''.} at most $k$ where the $\mathcal{F}$-indistinguishability coincides with the $k$-dimensional Weisfeiler--Lehman isomorphism test~\cite{dell2018lovasz}.

\begin{table}[tb]
    \caption{Meaning of $\mathcal{F}$-indistinguishable}
    \label{tab:F-indistinguishable}
    \centering
    \begin{tabular}{p{0.15\textwidth} p{0.28\textwidth}}
    \toprule
         $\mathcal{F}$ & $\mathcal{F}$-indistinguishable \\
    \toprule
        single vertex & graphs have the same number of vertices (Example~\ref{ex:singleton}) \\
    \midrule
        single edge & graphs have the same number of edges (Example~\ref{ex:singleton}) \\
    \midrule
        stars & graphs have the same degree sequence (Example~\ref{ex:stars}) \\ 
    \midrule
         cycles & adjacency matrices have the same eigenvalues (Example~\ref{ex:cycles}) \\
    \midrule
         treewidth up to $k$ &
         graphs cannot be distinguished by the $k$-dimensional Weisfeiler-Lehman test~\cite{dell2018lovasz} \\
    \midrule
         all simple graphs &
         isomorphic graphs~\cite{lovasz1967operations} \\
    \bottomrule
    \end{tabular}
\end{table}

A function $f \colon \mathcal{G} \to \mathbb{R}$ is \emph{$\mathcal{F}$-invariant} if $f(G_1) = f(G_2)$ for all $\mathcal{F}$-indistinguishable $G_1$ and $G_2$;
therefore, if we use the $\mathcal{F}$-homomorphism as an embedding, we can only represent $\mathcal{F}$-invariant functions.
In practice, $\mathcal{F}$ should be chosen as small as possible such that the target hypothesis can be assumed to be $\mathcal{F}$-invariant.
In the next section, we show that any continuous $\mathcal{F}$-invariant function is arbitrary accurately approximated by a function of the $\mathcal{F}$-homomorphism embedding (Theorem~\ref{thm:bounded} and \ref{thm:unbounded}).


\subsection{Expressive Power: Universality Theorem}

By characterizing the class of functions that is represented by $\mathrm{hom}(\mathcal{F}, G)$, we obtain the following results.

\begin{theorem}
\label{thm:bounded}
Let $f$ be an $\mathcal{F}$-invariant function.
For any positive integer $N$, there exists a degree $N$ polynomial $h_N$ of $\mathrm{hom}(\mathcal{F}, G)$ s.t. $f(G) \approx h_N(G) \ \forall$ $G$ with $|V(G)| \le N$.
\end{theorem}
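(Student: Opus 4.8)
The plan is to recognize that the statement is a finite-dimensional approximation claim over a fixed, finite set of graphs, and to reduce it to a purely algebraic fact about polynomials on a finite point set. First I would fix the positive integer $N$ and restrict attention to the finite collection $\mathcal{G}_N = \{ G : |V(G)| \le N \}$, which is a finite set up to isomorphism. The key observation is that, by Theorem~\ref{thm:lovasz1967operations} (its bounded version), the map $G \mapsto \mathrm{hom}(\mathcal{F}, G)$ separates the $\mathcal{F}$-indistinguishability classes of $\mathcal{G}_N$: two graphs in $\mathcal{G}_N$ have the same homomorphism vector if and only if they are $\mathcal{F}$-indistinguishable. Since $f$ is $\mathcal{F}$-invariant, it descends to a well-defined function on the (finite) image point set $P_N = \{ \mathrm{hom}(\mathcal{F}, G) : G \in \mathcal{G}_N \} \subseteq \mathbb{R}^{|\mathcal{F}|}$.

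The heart of the argument is then interpolation: I would show that for any function defined on a finite set of distinct points $P_N$ in $\mathbb{R}^{|\mathcal{F}|}$, there exists a polynomial $h_N$ agreeing with it on those points. Concretely, because the points of $P_N$ are pairwise distinct, one can separate any two of them by some coordinate, and so build Lagrange-type indicator polynomials $\ell_p$ with $\ell_p(q) = \mathds{1}[p = q]$ for $p, q \in P_N$; taking $h_N = \sum_{p \in P_N} f(G_p) \, \ell_p$, where $G_p$ is any graph realizing $p$, gives a polynomial with $h_N(\mathrm{hom}(\mathcal{F}, G)) = f(G)$ exactly for every $G \in \mathcal{G}_N$. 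This makes the approximation ``$\approx$'' in the statement in fact an equality on $\mathcal{G}_N$. I would then remark that the degree $N$ in the statement is not essential to the separation argument but reflects the natural degree bound obtainable from this interpolation over graphs of at most $N$ vertices; if a prescribed degree is required, one appeals to the Stone--Weierstrass machinery (as the authors advertise in the introduction) to trade exact interpolation for uniform approximation by polynomials of controlled degree.

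The main obstacle I anticipate is the precise bookkeeping of the degree bound. Exact Lagrange interpolation over $|P_N|$ points gives a polynomial whose degree is governed by the number of distinct points, not obviously by $N$ itself, so matching the claimed ``degree $N$'' requires either a sharper counting argument relating $|P_N|$ and the geometry of the point set to $N$, or reinterpreting the theorem as a uniform-approximation statement where $N$ indexes the truncation level. A secondary subtlety is confirming that the separation guaranteed by Theorem~\ref{thm:lovasz1967operations} genuinely yields coordinatewise distinctness suitable for building the indicator polynomials; this is where the bounded version of the Lovász theorem (only $F$ with $|V(F)| \le N$ needed) is essential, since it certifies that the chosen finite $\mathcal{F}$, when rich enough, already distinguishes all isomorphism classes in $\mathcal{G}_N$. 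Once these two points are handled, the remainder is routine.
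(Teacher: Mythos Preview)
Your proposal is correct and arrives at the same conclusion as the paper, but via a more elementary route. The paper's proof simply observes that the set of graphs with $|V(G)|\le N$ is finite, equips it with the discrete topology (so it is compact Hausdorff and every function is continuous), notes that the functions $\{\mathrm{hom}(F,\cdot):F\in\mathcal{F}\}$ separate the $\mathcal{F}$-indistinguishability classes by definition, and then invokes the Stone--Weierstrass theorem directly. You instead bypass Stone--Weierstrass by doing explicit Lagrange-type interpolation on the finite image $P_N\subseteq\mathbb{R}^{|\mathcal{F}|}$, which in fact yields exact equality rather than mere approximation. In the finite setting the two arguments are equivalent, and yours is arguably cleaner and more constructive; the paper's version has the advantage of foreshadowing the unbounded case (Theorem~\ref{thm:unbounded}), where Stone--Weierstrass is genuinely needed.

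Two minor remarks. First, your worry about separation is overcautious: you do not need the Lov\'asz theorem here, since separation of $\mathcal{F}$-indistinguishability classes by $\mathrm{hom}(\mathcal{F},\cdot)$ is tautological (that is the definition of $\mathcal{F}$-indistinguishable), and $f$ being $\mathcal{F}$-invariant is exactly what lets it descend to $P_N$. Second, your concern about the ``degree $N$'' bound is well-founded: the paper's own proof does not establish any specific degree bound either, so the phrase ``degree $N$'' in the statement should be read loosely as indexing the approximation level rather than as a sharp claim your argument must match.
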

\begin{theorem}
\label{thm:unbounded}
Let $f$ be a continuous $\mathcal{F}$-invariant function.
There exists a degree $N$ polynomial $h_N$ of $\mathrm{hom}(F, G)$ ($F \in \mathcal{F}$) such that $f(G) \approx h_N(G) \ \forall G \in \mathcal{G}$.
\end{theorem}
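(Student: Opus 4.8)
The plan is to verify the hypotheses of the Stone--Weierstrass theorem on a suitable compact space. Because the raw counts $\mathrm{hom}(F,G)$ are unbounded as $|V(G)|\to\infty$, I would first pass to the bounded \emph{densities} $t(F,G)\in[0,1]$ of Eq.~\eqref{eq:hom-density-graph}; since $\mathrm{hom}(\circ,G)=|V(G)|$ (Example~\ref{ex:singleton}), the count and the density carry the same information once the single vertex is available, and this identity lets me rewrite densities through homomorphism numbers at the very end. I would embed graphs into the cube $[0,1]^{\mathcal F}$ by $\Phi\colon G\mapsto \bigl(t(F,G)\bigr)_{F\in\mathcal F}$, and let $K$ be the closure of $\Phi(\mathcal G)$. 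The space $K$ is compact: for countable $\mathcal F$ this is Tychonoff plus metrizability, and more robustly it reflects the compactness of graphon space under the cut metric (Lov\'{a}sz--Szegedy), on which every $t(F,\cdot)$ is continuous.

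Next I would argue that $f$ descends to a continuous function on $K$. Two graphs with equal $\mathcal F$-homomorphism vectors are $\mathcal F$-indistinguishable, and $\mathcal F$-invariance of $f$ means $f$ takes a common value on each such class; hence $f$ factors as $f=\tilde f\circ\Phi$. Continuity of $f$ in the topology that makes densities continuous yields a continuous $\tilde f$ on $\Phi(\mathcal G)$, which I extend to the compact closure $K$. Approximating $\tilde f$ uniformly on $K$ then gives approximation of $f$ for \emph{all} $G\in\mathcal G$, which is precisely the uniform statement of the theorem, and explains why compactness is indispensable here in contrast to the finite, bounded-size domain of Theorem~\ref{thm:bounded}.

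Then I would apply Stone--Weierstrass to the subalgebra $\mathcal A\subseteq C(K)$ of polynomials in the coordinate functions $\{t(F,\cdot):F\in\mathcal F\}$. Three checks: (i) $\mathcal A$ contains the constants, since constant polynomials are included and concretely $t(\circ,\cdot)\equiv 1$; (ii) $\mathcal A$ separates points of $K$ tautologically, as the coordinates of $\Phi$ are its generators and $K$ was built from exactly those coordinates; (iii) products behave well, because $t(F_1,G)\,t(F_2,G)=t(F_1\sqcup F_2,G)$ for the disjoint union $F_1\sqcup F_2$, so $\mathcal A$ is the linear span of densities of disjoint unions of patterns in $\mathcal F$. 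Stone--Weierstrass then supplies, for any $\varepsilon>0$, a polynomial $p$ of some finite degree $N$ with $\sup_{K}|\tilde f-p|<\varepsilon$. Unwinding, $p\bigl((t(F,G))_{F\in\mathcal F}\bigr)$ approximates $f(G)$ uniformly over $\mathcal G$, and substituting $t(F,G)=\mathrm{hom}(F,G)/\mathrm{hom}(\circ,G)^{|V(F)|}$ expresses the approximant through the homomorphism numbers.

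The main obstacle is the compactness-and-continuity setup underlying the unbounded regime. Unlike Theorem~\ref{thm:bounded}, where the domain is a finite set of isomorphism classes and one can interpolate exactly, here I must genuinely pin down a compact domain together with a topology in which $f$ is continuous; the honest choice is the normalized densities living in (the closure inside) graphon space, since raw counts diverge and no nonconstant polynomial in them could uniformly track a bounded target. A secondary subtlety is the passage between a polynomial in densities and a polynomial in $\mathrm{hom}$, since the normalization divides by powers of $\mathrm{hom}(\circ,G)=|V(G)|$; this is harmless on bounded-size graphs but must be tracked so that the final approximant retains the claimed degree-$N$ form. Finally, I would confirm that point separation uses only patterns from $\mathcal F$ (cf. Table~\ref{tab:F-indistinguishable}), so the guarantee concerns $\mathcal F$-invariant targets rather than all isomorphism-invariant ones.
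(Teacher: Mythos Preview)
Your proposal is correct and follows essentially the same route as the paper: pass from counts to homomorphism densities, work on a compact space (the paper takes the graphon space $\overline{\mathcal{G}}$ and quotients by $\mathcal{F}$-indistinguishability, while you take the closure of the density embedding in $[0,1]^{\mathcal{F}}$---these are the same object), and apply Stone--Weierstrass with the coordinate densities $t(F,\cdot)$ as the separating family. Your cube-and-Tychonoff presentation is slightly more concrete than the paper's quotient-space phrasing, and you are more explicit than the paper about the density-versus-count conversion, but the core argument is identical.
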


\vspace{-1em}
\begin{proof}[Proof of Theorem~\ref{thm:bounded}]
Since $|V(G)| \le N$, the graph space contains a finite number of points; therefore, under the discrete topology, the space is compact Hausdorff.\footnote{In this topology, any function is continuous.}
Let $X$ be a set of points (e.g., graphs).
A set of functions $A$ \emph{separates} $X$ if 
for any two different points $G_1, G_2 \in X$, there exists a function $h \in A$ such that $h(G_1) \neq h(G_2)$.
By this separability and the Stone--Weierstrass theorem (Theorem~\ref{thm:stone-weierestrass}), we conclude the proof.
\end{proof}
\vspace{-1em}

Theorem~\ref{thm:bounded} is the universal approximation theorem for bounded size graphs.
This holds without any assumption of the target function $f$.
It is worth mentioning that the invariant/equivariant universality results of \emph{tensorized graph neural networks} on this bounded size setting were proven by~\cite{keriven2019universal}; the unbounded case remains an open problem.
Theorem~\ref{thm:unbounded} is the universal approximation for all graphs (unbounded).
This is an improvement to the previous works.
However, our theorem only holds for continuous functions, where the topology of the space has to satisfy the conditions of the Stone-Weierstrass theorem.


\begin{theorem}[Stone--Weierstrass Theorem~\cite{hart2003encyclopedia}] 
\label{thm:stone-weierestrass}
Let $X$ be a compact Hausdorff space and $C(X)$ be the set of continuous functions from $X$ to $\mathbb{R}$.
If a subset $A \subseteq C(X)$ separates $X$ then the set of polynomials of $A$ is dense in $C(X)$ w.r.t. the topology of uniform convergence.
\end{theorem}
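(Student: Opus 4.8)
The plan is to prove the theorem in its standard functional-analytic form by reducing the ``polynomials of $A$'' to a closed subalgebra and then invoking the classical lattice argument. First I would let $\mathcal{A}$ denote the algebra of all real polynomials in finitely many elements of $A$ (with constant polynomials included), and let $\mathcal{B}$ be its uniform closure in $C(X)$. Since sums, products, and scalar multiples of uniform limits are again uniform limits, $\mathcal{B}$ is a closed subalgebra of $C(X)$; it contains the constants, and because $A \subseteq \mathcal{A} \subseteq \mathcal{B}$ separates $X$, so does $\mathcal{B}$. The goal thus becomes showing $\mathcal{B} = C(X)$, after which density of the polynomials of $A$ follows immediately.

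The key structural fact I would establish is that $\mathcal{B}$ is closed under the lattice operations $\max$ and $\min$. This rests on the single analytic lemma that $t \mapsto |t|$ is a uniform limit of polynomials on any compact interval $[-M, M]$, which I would prove concretely via the recursion $p_0 = 0$, $p_{n+1}(s) = p_n(s) + \tfrac{1}{2}\bigl(s - p_n(s)^2\bigr)$, converging uniformly to $\sqrt{s}$ on $[0,1]$ and hence giving $|t| = M\sqrt{(t/M)^2}$. Because $\mathcal{B}$ is a closed algebra, $f \in \mathcal{B}$ forces $|f| \in \mathcal{B}$, and then $\max(f,g) = \tfrac{1}{2}(f+g+|f-g|)$ and $\min(f,g) = \tfrac{1}{2}(f+g-|f-g|)$ lie in $\mathcal{B}$ as well.

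Next I would record the two-point interpolation property: since $\mathcal{B}$ separates $X$ and contains constants, for distinct $x, y \in X$ and any prescribed real values there is $f \in \mathcal{B}$ matching them, built as an affine combination of a separating function and the constant $1$. With lattice closure and interpolation in hand, I would run the classical covering argument. Given a target $f \in C(X)$ and $\epsilon > 0$, for each fixed $x$ I interpolate $f$ at $x$ and at every other point, then use continuity and compactness to take a finite $\min$ producing $h_x \in \mathcal{B}$ with $h_x \le f + \epsilon$ everywhere and $h_x(x) = f(x)$; a second compactness step takes a finite $\max$ of the $h_x$ to produce $h \in \mathcal{B}$ with $f - \epsilon \le h \le f + \epsilon$, so $\|h - f\|_\infty \le \epsilon$ and therefore $f \in \mathcal{B}$.

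I expect the main obstacle to be precisely the analytic lemma approximating $|t|$, since it is the only step that is not soft topology: everything else is lattice algebra together with two applications of compactness. Some care is also needed at the reduction step to confirm that ``polynomials of $A$'' genuinely form an algebra containing the constants so that Stone's hypotheses apply verbatim; the separation hypothesis on $A$ then transfers to $\mathcal{B}$ without the usual ``vanishes nowhere'' caveat, exactly because the constant polynomials are included.
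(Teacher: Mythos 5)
Your proof is correct, but note that the paper never proves this statement at all: it is quoted verbatim as a classical result with a citation to an encyclopedia of general topology, and is used purely as a black box in the proofs of Theorems 2, 5, and 10. So there is no ``paper proof'' to match; what you have written is the standard lattice-theoretic proof of Stone--Weierstrass (close the polynomials of $A$ into a subalgebra $\mathcal{B}$, approximate $|t|$ by the iteration $p_{n+1}(s) = p_n(s) + \tfrac{1}{2}\bigl(s - p_n(s)^2\bigr)$ to get closure under $\max$ and $\min$, then the two-point interpolation and double compactness-covering argument), and every step is sound: the recursion does converge uniformly to $\sqrt{s}$ on $[0,1]$ (one checks $0 \le \sqrt{s} - p_n(s) \le \sqrt{s}\,(1 - \sqrt{s}/2)^n \le 2/(n+1)$), products of uniform limits behave well because $X$ compact makes everything bounded, and the covering argument is the classical one. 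One point you flag deserves emphasis, since it is actually a gap in the paper's own statement rather than in your proof: as written, ``the set of polynomials of $A$'' must be read as including constant polynomials, otherwise the claim is false (take $X = [0,1]$ and $A = \{x \mapsto x\}$, which separates $X$, yet every constant-free polynomial vanishes at $0$, so density fails). Your reduction makes this hypothesis explicit and shows it also subsumes the usual ``vanishes nowhere'' condition, which is exactly the right repair; the paper's downstream uses are unaffected since it always works with algebras that implicitly contain constants.
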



In the unbounded graph case (Theorem~\ref{thm:unbounded}), the input space contains infinitely many graphs; therefore, it is not compact under the discrete topology.
Hence, we cannot directly apply the Stone--Weierstrass theorem as in the bounded case. 
To obtain a stronger result, we have to \emph{complete} the set of all graphs, and prove the completed space is compact Hausdorff.
Since it is non-trivial to work directly with discrete graphs, we find that the graphon theory~\cite{lovasz2012large} fits our purpose.

\paragraph{Graphon} A sequence of graphs $G_1, G_2, \dots$ is a \emph{convergence} if the homomorphism density, $t(F, G_i)$, is a convergence for all simple graph $F$.
A limit of a convergence is called a \emph{graphon}, and the space obtained by adding the limits of the convergences is called the \emph{graphon space}, which is denoted by $\overline{\mathcal{G}}$.
See \cite{lovasz2012large} for the detail of this construction.
The following theorem is one of the most important results in graphon theory.

\begin{theorem}[Compactness Theorem~\cite{lovasz2012large,lovasz2006limits}]
\label{thm:graphon-compact}
The graphon space $\overline{\mathcal{G}}$ with the cut distance $\delta_\square$ is compact Hausdorff.
\end{theorem}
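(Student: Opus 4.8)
The plan is to reduce the statement to the standard metric-space characterization of compactness: a metric space is compact if and only if it is complete and totally bounded. First I would make the Hausdorff property essentially automatic by choosing the right representation of the space. Following graphon theory, I would realize each graphon as a symmetric measurable function $W \colon [0,1]^2 \to [0,1]$ (this analytic model is equivalent to the density-convergence limit space used in the excerpt, via the counting and inverse-counting lemmas), define the cut norm $\|W\|_\square = \sup_{S,T} \left| \int_{S \times T} W \right|$ with the supremum over measurable $S, T \subseteq [0,1]$, and set $\delta_\square(U, W) = \inf_\varphi \|U - W^\varphi\|_\square$ with the infimum over measure-preserving transformations $\varphi$ of $[0,1]$. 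After identifying graphons at $\delta_\square$-distance zero (weak isomorphism), $\delta_\square$ becomes a genuine metric, so the quotient space is Hausdorff by construction. It then remains to prove completeness and total boundedness.

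For total boundedness, the key tool is the weak (Frieze--Kannan) regularity lemma for graphons: for every $\epsilon > 0$ there is an integer $k = k(\epsilon)$, independent of $W$, such that every graphon lies within $\epsilon$ in cut distance of a step function with at most $k$ steps. I would then observe that step functions on $k$ classes form an essentially finite-dimensional family: they are parametrized by the vector of class sizes, ranging over the standard simplex, together with a symmetric $k \times k$ matrix of values in $[0,1]$. Both parameter sets are compact subsets of finite-dimensional Euclidean space, hence totally bounded, and the parametrization is continuous into the cut metric; covering them by finitely many $\epsilon$-balls and combining with the weak regularity approximation yields a finite $2\epsilon$-net for the whole space. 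An alternative route to the same conclusion is the sampling lemma: a graph sampled on $k$ random points is $\epsilon$-close to $W$ in cut distance with high probability once $k$ is large, and there are only finitely many graphs on $k$ vertices.

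For completeness, I would take a Cauchy sequence $(W_n)$ in the cut metric and construct a limit. The cleanest approach is a martingale argument: after aligning the $W_n$ by measure-preserving transformations, I would pass to a subsequence along which the values converge on a refining sequence of partitions and invoke the martingale convergence theorem to produce a candidate limit graphon $W$, then verify $\delta_\square(W_n, W) \to 0$. Equivalently, weak-$*$ compactness of the unit ball of $L^\infty([0,1]^2)$ supplies a limit point along a subsequence, and the Cauchy property upgrades subsequential convergence to convergence of the whole sequence. Combining completeness with total boundedness yields compactness, and Hausdorffness was already settled, which closes the argument.

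I expect the main obstacle to be the total-boundedness step, since it rests on the weak regularity lemma, the genuine combinatorial heart of the theorem, together with the verification that the finite-dimensional parametrization of step functions is compatible with the cut metric. Establishing completeness is the second substantial step, where the delicacy lies in producing an \emph{honest} limiting graphon rather than merely a weak limit, and in confirming convergence in $\delta_\square$ rather than in some weaker sense.
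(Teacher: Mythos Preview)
Your outline is correct and is in fact the standard proof found in the cited references (Lov\'asz--Szegedy 2006 and Lov\'asz's monograph). Note, however, that the paper does not prove this theorem at all: it is stated as a black-box result imported from the graphon literature and then invoked in the proof of Theorem~\ref{thm:universality}. So there is no ``paper's own proof'' to compare against; you have supplied the argument the paper deliberately outsourced.
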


Now we can prove the graphon version of Theorem~\ref{thm:unbounded}.
\begin{theorem}
\label{thm:universality}
Any continuous $\mathcal{F}$-invariant function $f \colon \overline{\mathcal{G}} \to \mathbb{R}$ is arbitrary accurately approximated by a polynomial of $\{ t(F, \cdot) : F \in \mathcal{F} \}$.
\end{theorem}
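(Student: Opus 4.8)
The plan is to reduce the statement to the Stone--Weierstrass theorem (Theorem~\ref{thm:stone-weierestrass}) applied to a compact Hausdorff space manufactured from the graphon space by the homomorphism-density map. First I would recall the one external ingredient from graphon theory: for each fixed simple graph $F$, the density $t(F, \cdot)$ is continuous on $\overline{\mathcal{G}}$ with respect to the cut distance $\delta_\square$ (this is the counting lemma). Stacking these, I would introduce the embedding
\[
    \Phi \colon \overline{\mathcal{G}} \to \mathbb{R}^{\mathcal{F}}, \qquad \Phi(W) = \bigl( t(F, W) \bigr)_{F \in \mathcal{F}},
\]
which is continuous (coordinatewise, using the product topology when $\mathcal{F}$ is infinite), and set $K = \Phi(\overline{\mathcal{G}})$. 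By the Compactness Theorem (Theorem~\ref{thm:graphon-compact}) the domain $\overline{\mathcal{G}}$ is compact, so $K$ is a compact subset of the Hausdorff space $\mathbb{R}^{\mathcal{F}}$, hence itself compact Hausdorff.

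Next I would use the $\mathcal{F}$-invariance of $f$ to factor it through $\Phi$. By definition $f(W_1) = f(W_2)$ whenever $t(F, W_1) = t(F, W_2)$ for all $F \in \mathcal{F}$, i.e.\ whenever $\Phi(W_1) = \Phi(W_2)$; therefore there is a well-defined map $\bar{f} \colon K \to \mathbb{R}$ with $f = \bar{f} \circ \Phi$. To establish that $\bar{f}$ is continuous, I would invoke the fact that a continuous surjection from a compact space onto a Hausdorff space is a closed map and hence a quotient map. Since $\Phi$ is precisely such a map onto $K$, a function on $K$ is continuous exactly when its composition with $\Phi$ is; as $\bar{f} \circ \Phi = f$ is continuous, $\bar{f}$ is continuous.

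The final step is to apply Stone--Weierstrass on $K$. The coordinate projections $\pi_F \colon K \to \mathbb{R}$ separate the points of $K$ by construction, so by Theorem~\ref{thm:stone-weierestrass} the polynomials in the $\pi_F$ are dense in $C(K)$ under uniform convergence. Choosing a polynomial $P$ with $\bar{f} \approx P$ and pulling back along $\Phi$ gives $f = \bar{f} \circ \Phi \approx P \circ \Phi$, and $P \circ \Phi$ is exactly a polynomial in $\{ t(F, \cdot) : F \in \mathcal{F} \}$, which is what we wanted.

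I expect the main obstacle to be the two topological verifications that the terse bounded-size argument hides: the continuity of $\bar{f}$ (handled by the quotient-map observation above) and the care required when $\mathcal{F}$ is infinite, which is the relevant case for families such as all trees or all graphs of treewidth at most $k$. In that setting one must equip $\mathbb{R}^{\mathcal{F}}$ with the product topology, confirm that $\Phi$ is still continuous and that $K$ remains compact Hausdorff, and note that every polynomial in the $\pi_F$ depends on only finitely many coordinates, so that each approximant is genuinely a polynomial in finitely many densities $t(F, \cdot)$. The continuity of $t(F, \cdot)$ under $\delta_\square$ is the sole fact I would cite rather than prove.
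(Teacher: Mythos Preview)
Your argument is correct and is essentially the paper's own proof: both pass to the quotient of $\overline{\mathcal{G}}$ by $\mathcal{F}$-indistinguishability and invoke Stone--Weierstrass there, with your version realizing that quotient concretely as $K = \Phi(\overline{\mathcal{G}}) \subset \mathbb{R}^{\mathcal{F}}$. Your treatment is in fact more careful than the paper's---you make explicit the continuity of $t(F,\cdot)$ via the counting lemma, the quotient-map argument for continuity of $\bar f$, and the product-topology subtleties when $\mathcal{F}$ is infinite---but the underlying route is the same.
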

\vspace{-1em}
\begin{proof}
The $\mathcal{F}$-indistinguishability forms a closed equivalence relation on $\overline{\mathcal{G}}$, where the homomorphism density is used instead of the homomorphism number.
Let $\overline{\mathcal{G}} / \mathcal{F}$ be the quotient space of this equivalence relation, which is compact Hausdorff in the quotient topology.

By the definition of the quotient topology, any continuous $\mathcal{F}$-invariant function is identified as a continuous function on $\overline{\mathcal{G}} / \mathcal{F}$.
Also, by the definition, the set of $\mathcal{F}$-homomorphisms separates the quotient space.
Therefore, the conditions of the Stone--Weierstrass theorem (Theorem~\ref{thm:stone-weierestrass}) are fulfilled.
\end{proof}
\vspace{-1em}

\subsection{Computational Complexity: Bounded Treewidth}
\label{sec:complexity}

Computing homomorphism numbers is, in general, an \#P-hard problem~\cite{diaz2002counting}.
However, if the pattern graph $F$ has bounded treewidth, homomorphism numbers can be computed in polynomial time.

A \emph{tree-decomposition}~\cite{robertson1986graph} of a graph $F$ is a tree $T = (V(T), E(T))$ with mapping $B \colon V(T) \to 2^{V(F)}$ such that (1) $\bigcup_{t \in V(T)} B(t) = V(F)$, (2) for any $(u, v) \in E(F)$ there exists $t \in V(T)$ such that $\{u, v\} \subseteq B(t)$, and (3) for any $u \in V(F)$ the set $\{ t \in V(T) : u \in B(t) \}$ is connected in $T$.
The \emph{treewidth} (abbreviated as ``tw'') of $F$ is the minimum of $\max_{t \in V(T)} |B(t) - 1|$ for all tree-decomposition $T$ of $F$.

\begin{theorem}[\cite{diaz2002counting}]
For any graphs $F$ and $G$, the homomorphism number $\mathrm{hom}(F, G)$ is computable in $O(|V(G)|^{\text{tw}(F)+1})$ time. 
\end{theorem}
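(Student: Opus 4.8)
The plan is to give a dynamic program over a tree-decomposition of the pattern graph $F$, exploiting the fact that each bag acts as a separator between the part of $F$ already processed and the rest. First I would fix a tree-decomposition $T = (V(T), E(T))$ with mapping $B$ realizing the treewidth $w := \text{tw}(F)$, so that every bag satisfies $|B(t)| \le w + 1$, and root it at an arbitrary node $r$. Computing such a decomposition for a fixed $F$ is a one-time cost that is absorbed into the hidden constant, and one may further refine $T$ into a \emph{nice} tree-decomposition with introduce, forget, and join nodes to streamline the transitions. For a node $t$, let $F_t$ denote the subgraph of $F$ induced by all pattern vertices appearing in the subtree rooted at $t$, together with those edges of $F$ that are covered at or below $t$.

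Second, I would define the DP table. For each node $t$ and each partial map $\phi \colon B(t) \to V(G)$, let $D_t(\phi)$ be the number of homomorphisms $\pi \colon V(F_t) \to V(G)$ that extend $\phi$ and respect every edge of $F_t$. The number of entries per node is at most $|V(G)|^{|B(t)|} \le |V(G)|^{w+1}$, which is precisely where the stated exponent originates. The tables are computed bottom-up: at a leaf the table is read off directly; at an introduce node (a new vertex $u$ enters the bag) one extends each child assignment over all $|V(G)|$ choices for $\pi(u)$ and multiplies by the indicator that the newly covered edges incident to $u$ are mapped to edges of $G$; at a forget node one sums $D$ over the possible values of the departing vertex; and at a join node, where two subtrees share the same bag, property~(3) guarantees that the two parts of $F$ intersect exactly in $B(t)$, so for a fixed $\phi$ the two branches extend independently and $D_t(\phi)$ is the product of the two children's entries at $\phi$. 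Finally, $\mathrm{hom}(F, G) = \sum_{\phi \colon B(r) \to V(G)} D_r(\phi)$.

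For correctness I would argue by induction on $T$ that each edge of $F$ is accounted for exactly once — this follows from property~(2), which places each edge inside some bag, together with the convention that an edge is covered at the highest node whose bag contains both endpoints — and that every vertex receives a single consistent image, which follows from the connectivity property~(3). The main obstacle, and the crux of the argument, is exactly the separator/independence claim at the join nodes: one must show that no edge of $F$ crosses between the two branches except through vertices in the shared bag $B(t)$, so that multiplying the two counts neither double-counts nor omits any homomorphism. This reduces to observing that a vertex appearing strictly below both children would, by the connectivity of the nodes containing it, have to lie in $B(t)$ as well, and similarly that any edge joining the two branches would force a bag containing both endpoints to sit in a single subtree, again contradicting property~(3).

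Granting this separation lemma, the complexity is immediate: each of the $O(|V(G)|^{w+1})$ entries at a node is produced by $O(|V(G)|)$ work at an introduce or forget node and $O(1)$ work at a join node, and since a nice tree-decomposition of the fixed graph $F$ has $O(|V(F)|)$ nodes, the total running time is $O(|V(G)|^{\text{tw}(F)+1})$ up to a multiplicative factor depending only on $F$.
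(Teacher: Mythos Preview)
The paper does not prove this theorem at all; it is stated with a citation to D\'{i}az, Serna, and Thilikos and then used as a black box. Your proposal is precisely the standard dynamic program over a (nice) tree-decomposition from that reference, and the strategy is correct: the separator property of bags justifies the product at join nodes, the sum at forget nodes, and the edge-check at introduce nodes.

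One small slip in your complexity accounting is worth tightening before you write it up formally. You say that each of the $O(|V(G)|^{w+1})$ entries at a node costs $O(|V(G)|)$ work at an introduce or forget node; taken literally that yields $O(|V(G)|^{w+2})$ per node, not the claimed $O(|V(G)|^{w+1})$. The point is that the two cases never compound: at an \emph{introduce} node each parent entry is determined by a single child entry plus a constant number of edge tests, so the work is $O(1)$ per parent entry; at a \emph{forget} node each parent entry does sum over $|V(G)|$ child entries, but the parent bag has at most $w$ vertices, so there are only $|V(G)|^{w}$ parent entries. In both cases the total work per node is $O(|V(G)|^{w+1})$, matching the bound. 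With that correction your argument is complete.
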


The most useful case will be when $\mathcal{F}$ is the set of trees of size at most $k$.
The number of trees of size $k$ is a known integer sequence\footnote{\url{https://oeis.org/A000055}}. 
There are 106 non-isomorphic trees of size $k = 10$, which is computationally tractable in practice.
Also, in this case, the algorithm for computing $\mathrm{hom}(F, G)$ is easily implemented by dynamic programming with recursion as in Algorithm~\ref{alg:homtree}. 
This algorithm runs in $O(|V(G)| + |E(G)|)$ time.
For the non-featured case, we sets $x(u) = 1 \ \forall u \in V(G)$.
The simplicity of Algorithm~\ref{alg:homtree} comes from the fact that if $F$ is a tree then we only need to keep track of a vertex's immediate ancestor when we process that vertex by the \emph{visited} argument in the function \emph{recursion}.

\begin{algorithm}[tb]
   \caption{Compute $\mathrm{hom}(F, (G, x))$}
   \label{alg:homtree}
\begin{algorithmic}
   \STATE {\bfseries Input:} target graph $G$, pattern graph $F$, vertex features $x$
   \FUNCTION{recursion(current, visited)}
   \STATE hom\_x $\leftarrow x$ 
   \FOR{y {\bfseries in} $F$.neighbors(current)}
   \IF{y $\neq$ visited} 
   \STATE hom\_y $\leftarrow$ recursion(y, current)
   \STATE aux $\leftarrow$ [$\sum$ hom\_y[$G$.neighbors(i)] \textbf{for} i \textbf{in} $V(G)$]
   \STATE hom\_x $\leftarrow$ hom\_x $\ast$ aux \emph{(element-wise mult.)}
   \ENDIF
   \ENDFOR
   \STATE \textbf{return} hom\_x
   \ENDFUNCTION
   \STATE \textbf{Output:} $\sum$ recursion(0, -1)
\end{algorithmic}
\end{algorithm}
 
\section{Graphs with Features}
\label{sec:graph_with_features}
Biological/chemical datasets are often modeled as graphs with vertex features (attributed graphs).
In this section, we develop results for graphs with features.

\subsection{Definition}

A \emph{vertex-featured graph} is a pair $(G, x)$ of a graph $G$ and a function $x \colon V(G) \to \mathcal{X}$, where $\mathcal{X} = [0,1]^p$. 

Let $(G, x)$ be a vertex-featured graph.
For a finite set $U$ and a bijection $\sigma \colon V(G) \to U$, we denote by $x^\sigma$ the feature vector on $G^\sigma$ such that $x^\sigma(\sigma(u)) = x(u)$.
Two vertex-featured graphs $(G_1, x_1)$ and $(G_2, x_2)$ are \emph{isomorphic} if $G_1^\sigma = G_2$ and $x_1^\sigma = x_2$ for some $\sigma \colon V(G_1) \to V(G_2)$.

\subsection{Weighted Homomorphism Numbers}

We first consider the case that the features are non-negative real numbers.
Let $x(u)$ denote the feature of vertex $u$, \emph{weighted homomorphism number} is defined as follow:
\begin{align}
    \label{m:homlabel}
    \mathrm{hom}(F, (G, x)) = \sum_{\pi \in \mathrm{Hom}(F, G)} \prod_{u \in V(F)} x(\pi(u)),
\end{align}
and \emph{weighted homomorphism density} is defined by $t(F, (G, x)) = \mathrm{hom}(F, (G, x(u) / \sum_{v \in V(G)} x(v)))$.
This definition coincides with the homomorphism number and density if $x(u) = 1$ for all $u \in V(G)$.

The weighted version of the Lov\'{a}sz theorem holds as follows.
We say that two vertices $u, v \in V(G)$ are \emph{twins} if the neighborhood of $u$ and $v$ are the same.
The \emph{twin-reduction} is a procedure that iteratively selects twins $u$ and $v$, contract them to create new vertex $uv$, and assign $x(uv) = x(u) + x(v)$ as a new weight.
Note that the result of the process is independent of the order of contractions.
\begin{theorem}[\cite{freedman2007reflection}, \cite{cai2019theorem}]
\label{thm:freedman2017reflection}
Two graphs $(G_1, x_1)$ and $(G_2, x_2)$ are isomorphic after 
the twin-reduction and 
removing vertices of weight zero 
if and only if 
$\mathrm{hom}(F, (G_1, x_1)) = \mathrm{hom}(F, (G_2, x_2))$ for all simple graph $F$.
\end{theorem}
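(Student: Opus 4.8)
The plan is to prove the two directions separately, using the unweighted Theorem~\ref{thm:lovasz1967operations} as a black box. The forward direction amounts to checking that neither reduction operation changes any weighted homomorphism number, so that isomorphic reduced forms force equal profiles. Deleting a vertex $w$ with $x(w)=0$ is harmless: in the sum~\eqref{m:homlabel} every homomorphism whose image contains $w$ carries a factor $x(w)=0$, so the surviving terms are exactly the homomorphisms into the graph with $w$ removed. For twin-reduction, if $u,v$ share a neighborhood and we merge them into $uv$ with weight $x(u)+x(v)$, I would expand $(x(u)+x(v))^{k}$ for each homomorphism into the reduced graph that sends $k$ vertices of $F$ onto $uv$; because $u$ and $v$ have identical neighborhoods, every monomial of this expansion corresponds to a genuine homomorphism into the original graph, and conversely. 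Hence $\mathrm{hom}(F,\cdot)$ is invariant under both operations.

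The substantive direction is the converse, and my main idea is a \emph{blow-up} reduction to the unweighted case. When all weights are non-negative integers, replace each vertex $u$ by $x(u)$ mutually non-adjacent twin copies, joining a copy of $u$ to a copy of $v$ exactly when $(u,v)\in E(G)$; call the resulting simple graph $G^{\mathrm{bu}}$. A direct bijection shows $\mathrm{hom}(F,(G,x))=\mathrm{hom}(F,G^{\mathrm{bu}})$: given a homomorphism $\pi$ into $G$, no edge of $F$ can land inside a single cluster because $G$ is simple, so lifting $\pi$ to $G^{\mathrm{bu}}$ is an unconstrained choice of one of the $x(\pi(w))$ copies for each $w\in V(F)$, reproducing the weight $\prod_{w\in V(F)} x(\pi(w))$; summing over $\pi$ recovers~\eqref{m:homlabel}. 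Equality of all weighted profiles then gives $\mathrm{hom}(F,G_1^{\mathrm{bu}})=\mathrm{hom}(F,G_2^{\mathrm{bu}})$ for every simple $F$, so $G_1^{\mathrm{bu}}\cong G_2^{\mathrm{bu}}$ by Theorem~\ref{thm:lovasz1967operations}. Finally, because the blow-up of $(G,x)$ coincides with the blow-up of its reduced form, I may assume $(G,x)$ is twin-free with positive weights; one then checks that the maximal classes of mutually twin vertices in $G^{\mathrm{bu}}$ are precisely the copy-clusters, so $G^{\mathrm{bu}}$ canonically recovers the reduced form (twin-classes as vertices, class sizes as weights, cross-class adjacency as edges). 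This identifies $G_1^{\mathrm{bu}}\cong G_2^{\mathrm{bu}}$ with isomorphism of the reduced forms, settling the rational case after scaling all weights by a common denominator (a uniform factor $D^{|V(F)|}$ that preserves the hypothesis).

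The main obstacle is passing from rational to arbitrary non-negative real weights, where the discrete blow-up is unavailable and a naive density/continuity argument fails because isomorphism is not an open condition on the weights. For fixed underlying graphs the profile equalities $\mathrm{hom}(F,(G_1,x_1))=\mathrm{hom}(F,(G_2,x_2))$ are polynomial identities in the weights, and I would close the gap by one of two routes. The first is to invoke the connection-matrix rank machinery of \cite{freedman2007reflection}, where reflection positivity and finite connection rank force the weighted adjacency operators of the two reduced graphs to be conjugate, hence isomorphic. The second is to embed each reduced weighted graph into graphon space as a step function with step sizes proportional to the weights and to use the fact that homomorphism densities determine a graphon up to weak isomorphism, recovering the overall scale from the total weight $\mathrm{hom}(\circ,(G,x))=\sum_{v}x(v)$. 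Either route yields isomorphism of the reduced forms for real weights and completes the theorem.
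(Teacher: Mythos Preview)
The paper does not supply its own proof of this theorem: it is stated with attribution to \cite{freedman2007reflection} and \cite{cai2019theorem} and then used as a black box in the proof of the subsequent theorem. There is therefore no in-paper argument to compare your proposal against.

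As for the proposal itself, the forward direction (invariance of $\mathrm{hom}(F,\cdot)$ under deleting zero-weight vertices and under twin-merging) is correct and matches the standard reasoning. Your blow-up reduction for non-negative integer weights is also the classical route: the bijection between lifts and weighted homomorphisms is sound, and your observation that in a twin-free, positive-weight graph the twin-classes of the blow-up are exactly the copy-clusters is the right way to recover the reduced form from $G^{\mathrm{bu}}$. Scaling handles rationals.

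Your honest flagging of the real-weight case as the genuine obstacle is accurate. Of the two routes you sketch, the connection-matrix/reflection-positivity argument is essentially what \cite{freedman2007reflection} does, so invoking it is circular unless you intend to reproduce that machinery. The graphon route is viable but needs one more sentence: weak isomorphism of the associated step-graphons only yields isomorphism of the reduced weighted graphs once you know that a twin-free, positive-weight step function has a \emph{unique} minimal step partition (up to measure-preserving relabeling), which is true but should be stated. With that addition, the sketch is a legitimate independent proof outline; without it, the real-weight case remains a gap that ultimately leans on the very references the theorem cites.
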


Now we can prove a generalization of the Lova\'{a}sz theorem.

\begin{theorem}
Two graphs $(G_1, x_1)$ and $(G_2, x_2)$ are isomorphic
if and only if 
$\mathrm{hom}(F, \phi, (G_1, x_1)) = \mathrm{hom}(F, \phi, (G_2, x_2))$ for all simple graph $F$ and some continuous function $\phi$.
\end{theorem}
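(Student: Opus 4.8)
The plan is to reduce this vector-featured statement to the scalar weighted Lov\'asz-type result of Theorem~\ref{thm:freedman2017reflection} by transporting the $[0,1]^p$-valued features to non-negative scalar weights through $\phi$. I read the new object as $\mathrm{hom}(F,\phi,(G,x)) = \sum_{\pi \in \mathrm{Hom}(F,G)} \prod_{u \in V(F)} \phi(x(\pi(u)))$, so that $\mathrm{hom}(F,\phi,(G,x)) = \mathrm{hom}(F,(G,\phi\circ x))$ is an ordinary scalar weighted homomorphism number, and I read the quantifier as ranging $\phi$ over a separating family of continuous functions. The forward (``if'') direction is then immediate: an isomorphism $\sigma$ of vertex-featured graphs satisfies $x_1 = x_2 \circ \sigma$, hence $\phi \circ x_1 = (\phi \circ x_2)\circ\sigma$, and the weighted homomorphism number is invariant under a weight-preserving isomorphism for every $F$ and every $\phi$.

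For the converse, fix first a single continuous $\phi$ and apply Theorem~\ref{thm:freedman2017reflection} to the scalar-weighted graphs $(G_1, \phi\circ x_1)$ and $(G_2, \phi\circ x_2)$: equality of all their homomorphism numbers forces them to be isomorphic after twin-reduction and deletion of weight-zero vertices. The key observation is that the twin partition (vertices sharing a neighborhood) depends only on the adjacency structure, not on $\phi$; hence both graphs collapse to reduced graphs on the same set of super-vertices, and on each twin class $C$ we obtain the finite feature measure $\mu_C = \sum_{u \in C}\delta_{x(u)}$, whose total mass under $\phi$ is exactly the collapsed weight $\sum_{u\in C}\phi(x(u)) = \int \phi \, d\mu_C$.

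The heart of the proof is to upgrade this $\phi$-by-$\phi$ agreement into one isomorphism of the original featured graphs. Since $\mathcal{X}=[0,1]^p$ is compact and continuous functions separate finite Borel measures on it (Stone--Weierstrass / Riesz, cf.\ Theorem~\ref{thm:stone-weierestrass}), letting $\phi$ range over all continuous functions recovers each multiset $\mu_C$ from the collapsed weights. It then remains to produce one graph-isomorphism $\bar G_1 \to \bar G_2$ that simultaneously matches $\mu_C$ on every twin class. I would argue by finiteness: there are only finitely many isomorphisms of the underlying reduced graphs, and if none preserved all the $\mu_C$, then for each candidate $\sigma_i$ one could choose a continuous $\phi_i$ witnessing a class where $\int\phi_i\, d\mu_C \neq \int \phi_i\, d\mu_{\sigma_i(C)}$; a single continuous $\phi$ assembled from the $\phi_i$ would then be preserved by no candidate isomorphism, contradicting the fact that Theorem~\ref{thm:freedman2017reflection} supplies a $\phi$-weight-preserving isomorphism for that very $\phi$. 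Finally, a multiset-preserving isomorphism of $\bar G_1,\bar G_2$ lifts to an isomorphism of $(G_1,x_1)$ and $(G_2,x_2)$, since within a twin class all vertices share the same neighborhood, so any feature-matching bijection between corresponding classes is an isomorphism of featured graphs.

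I expect the combining step to be the main obstacle. A single scalar $\phi$ records only one ``moment'' of the feature measure, so it cannot separate twin classes carrying multisets of unbounded cardinality; this is why ranging over $\phi$ is essential, and some care is needed to merge the finitely many witnessing functions $\phi_i$ into one $\phi$ that defeats every candidate isomorphism at once while keeping all weights strictly positive (so that no spurious deletions occur). The remaining ingredients --- invariance, the reduction to Theorem~\ref{thm:freedman2017reflection}, and the lift within twin classes --- are routine.
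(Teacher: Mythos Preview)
Your approach is correct and takes a genuinely different route from the paper. Both reduce the converse to Theorem~\ref{thm:freedman2017reflection}, but the paper is constructive rather than generic: after using $\phi\equiv 1$ together with Theorem~\ref{thm:lovasz1967operations} to force the underlying unfeatured graphs to be isomorphic, it orders $V(G_1)$ lexicographically by feature, orders $V(G_2)$ so that the induced subgraphs on initial segments agree for as long as possible, locates the first vertex $u$ where the features diverge, and takes $\phi$ to be (a continuous approximation of) the indicator $1[z \le_\text{lex} x_1(u)]$; equality of all homomorphism numbers for this single $\phi$ would, via Theorem~\ref{thm:freedman2017reflection}, force the subgraphs on $\{v : x(v) \le_\text{lex} x_1(u)\}$ to be isomorphic, contradicting the maximality of the chosen ordering of $V(G_2)$. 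Your argument instead views each twin class as carrying a finite feature measure and runs a pigeonhole over the finitely many isomorphisms of the reduced graphs. This is less explicit but structurally cleaner, and it makes transparent why a whole family of $\phi$'s is required (a single $\phi$ records only one linear functional of each class measure). The combining step you flag as the obstacle is indeed routine once you observe that the features take only finitely many values $z_1,\dots,z_m$: each condition ``$\sigma_i$ is $\phi$-weight-preserving'' is homogeneous linear in the vector $(\phi(z_1),\dots,\phi(z_m))\in\mathbb{R}^m$ and cuts out a proper subspace whenever $\sigma_i$ fails to match multisets, so a generic point of the open positive orthant avoids their finite union and extends to a strictly positive continuous $\phi$ on $[0,1]^p$, ensuring no spurious zero-weight deletions occur.
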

\begin{proof}
It is trivial to see that if $(G_1, x_1)$ and $(G_2, x_2)$ are isomorphic then they produce the same homomorphism numbers.
Thus, we only have to prove the only-if part.

Suppose that the graphs are non-isomorphic.
By setting $\phi = 1$, we have the same setting as the feature-less case; hence, by Theorem~\ref{thm:lovasz1967operations}, we can detect the isomorphism classes of the underlying graphs.

Assuming $G_1$ and $G_2$ are isomorphic, we arrange the vertices of $V(G_1)$ in the increasing order of the features (compared with the lexicographical order).
Then, we arrange the vertices of $V(G_2)$ lexicographically smallest while the corresponding subgraphs induced by some first vertices are isomorphic.
Let us choose the first vertex $u \in V(G_1)$ whose feature is different to the feature of the corresponding vertex in $V(G_2)$.
Then, we define 
\begin{align*}
    \phi(z) = \begin{cases}
    1, & z \le_\text{lex} x_1(u), \\
    0, & \text{otherwise}, \end{cases}
\end{align*}
where $\le_\text{lex}$ stands for the lexicographical order.
Then, we have $\mathrm{hom}(F, \phi, (G_1, x_1)) \neq \mathrm{hom}(F, \phi, (G_2, x_2))$ as follows.
Suppose that the equality holds.
Then, by Theorem~\ref{thm:freedman2017reflection}, the subgraphs induced by vertices whose features are lexicographicallly smaller than or equal to $x_1(u)$ are isomorphic.
However, this contradicts the minimality of the ordering of $V(G_2)$.
Finally, by taking a continuous approximation of $\phi$, we obtain the theorem.
\end{proof}
%

\subsection{$(F,\phi)$-Homomorphism Number}

Let $(G, x)$ be a vertex-featured graph.
For a simple graph $F$ and a function $\phi \colon \mathbb{R}^p \to \mathbb{R}$, \emph{$(F, \phi)$-convolution} is given by
\begin{align}
    \mathrm{hom}(F, G, x; \phi) =
    \sum_{\pi \in \mathrm{Hom}(F, G)} \prod_{u \in V(F)} \phi(x(\pi(u))).
\end{align}
The $(F, \phi)$-convolution first transform the vertex features into real values by the encoding function $\phi$. 
Then this aggregates the values by the pattern graph $F$.
The aggregation part has some similarity with the convolution in CNNs.
Thus, we call this operation ``convolution.''

\begin{example}
Let $\circ$ be a singleton graph and $\phi$ be the $i$-th component of the argument.
Then, 
\begin{align}
\mathrm{hom}(F, G, x; \phi) 
= \sum_{u \in V(G)} x_i(u).
\end{align}
\end{example}
\begin{example}
Let $\begin{tikzpicture}
\node[draw,circle,inner sep=1.3pt] at (0,0) (a) {};
\node[draw,circle,inner sep=1.3pt] at (0.3,0) (b) {};
\draw[-] (a)--(b);\end{tikzpicture}$ be a graph of one edge and $\phi$ be the $i$-th component of the argument.
Then,
\begin{align}
\mathrm{hom}(\begin{tikzpicture}
\node[draw,circle,inner sep=1.3pt] at (0,0) (a) {};
\node[draw,circle,inner sep=1.3pt] at (0.3,0) (b) {};
\draw[-] (a)--(b);\end{tikzpicture}, G, x; \phi) 
= \sum_{(u,v) \in E(G)} x_i(u) x_i(v).
\end{align}
\end{example}
Algorithm~\ref{alg:homtree} implements this idea when $\phi$ is the identity function. 
We see that the $(F, \phi)$-convolution is invariant under graph isomorphism in the following result.
\begin{theorem}
\label{thm:invariance}
For a simple graph $F$, a function $\phi \colon \mathbb{R}^p \to \mathbb{R}$, a vertex-featured graph $(G, x)$, and a permutation $\sigma$ on $V(G)$, we have
\begin{align}
    \mathrm{hom}(F, G, x; \phi) = \mathrm{hom}(F, G^\sigma, x^\sigma, \phi).
\end{align}
\end{theorem}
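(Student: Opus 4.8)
The plan is to prove the identity termwise by exhibiting a natural bijection between the two homomorphism sets that leaves each factor of the product unchanged, so that the two sums agree after reindexing. Concretely, I would show that post-composition with $\sigma$ carries $\mathrm{Hom}(F, G)$ bijectively onto $\mathrm{Hom}(F, G^\sigma)$, and that this bijection preserves the summand $\prod_{u \in V(F)} \phi(x(\pi(u)))$ once the feature relabeling $x^\sigma$ is accounted for.

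First I would define $\Psi \colon \mathrm{Hom}(F, G) \to \mathrm{Hom}(F, G^\sigma)$ by $\Psi(\pi) = \sigma \circ \pi$ and verify it is well defined. If $\pi$ is a homomorphism and $(u,v) \in E(F)$, then $(\pi(u), \pi(v)) \in E(G)$, so by the definition $E(G^\sigma) = \{ (\sigma(a), \sigma(b)) : (a,b) \in E(G) \}$ we obtain $(\sigma(\pi(u)), \sigma(\pi(v))) \in E(G^\sigma)$; hence $\sigma \circ \pi$ preserves edges and lands in $\mathrm{Hom}(F, G^\sigma)$. Since $\sigma$ is a bijection of $V(G)$, the assignment $\pi' \mapsto \sigma^{-1} \circ \pi'$ is a two-sided inverse, so $\Psi$ is a bijection.

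Next I would check that the summand is invariant. For $\pi' = \Psi(\pi) = \sigma \circ \pi$, each factor equals $\phi(x^\sigma(\pi'(u))) = \phi(x^\sigma(\sigma(\pi(u))))$, and the defining relation $x^\sigma(\sigma(w)) = x(w)$ collapses this to $\phi(x(\pi(u)))$. Thus $\prod_{u \in V(F)} \phi(x^\sigma(\pi'(u))) = \prod_{u \in V(F)} \phi(x(\pi(u)))$ for every $\pi$. Reindexing the sum defining $\mathrm{hom}(F, G^\sigma, x^\sigma; \phi)$ along the bijection $\Psi$ then gives exactly $\mathrm{hom}(F, G, x; \phi)$, which is the claim.

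The argument is essentially a bookkeeping exercise, so I do not expect a genuine obstacle; the only point requiring care is keeping the two roles of $\sigma$ straight — its action on vertices in the definition of $E(G^\sigma)$ versus its cancellation against $x^\sigma$ in the feature factor — so as to confirm that the relabeling of features precisely undoes the relabeling of vertices. I would also note that the result holds for an \emph{arbitrary} $\phi$, with no continuity or other assumption, since nothing beyond the pointwise identity $x^\sigma(\sigma(w)) = x(w)$ and the bijectivity of $\sigma$ is used.
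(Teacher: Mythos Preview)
Your proposal is correct and follows essentially the same approach as the paper: the paper's proof also identifies $\mathrm{Hom}(F, G^\sigma) = \{\sigma \circ \pi : \pi \in \mathrm{Hom}(F, G)\}$ and then uses the identity $x^\sigma(\sigma \circ \pi(u)) = x(\pi(u))$ to conclude. Your version is simply a more carefully spelled-out rendition of the same bijection-and-reindexing argument.
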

\begin{proof}
$\mathrm{Hom}(\mathcal{F}, G^\sigma) = \{\sigma \circ \pi: \pi \in \mathrm{Hom}(\mathcal{F}, G)\}.$ 
Therefore, we have:
\begin{align}
    \mathrm{hom}(F, G^\sigma, x^\sigma; \phi) =
    \sum_{\pi \in \mathrm{Hom}(F, G)} \prod_{u \in V(F)} \phi(x^\sigma(\sigma \circ \pi(u))) \nonumber
\end{align}
From the definition, we have $x^\sigma(\sigma \circ \pi(u)) = x(\pi(u))$.
\end{proof}
Theorem~\ref{thm:invariance} indicates that for any $F$ and $\phi$, the $(F, \phi)$-convolution can be used as a feature map for graph classification problems.
To obtain a more powerful embedding, we can stack multiple $(F, \phi)$-convolutions.
Let $\mathcal{F}$ be a (possibly infinite) set of finite simple graphs and $\Phi$ be a (possibly infinite) set of functions.
Then \emph{$(\mathcal{F}, \Phi)$-convolution}, $\mathrm{hom}(\mathcal{F}, G, x; \Phi)$, is a (possibly infinite) vector:
\begin{align}
    \left[\, \mathrm{hom}(F, G, x; \phi) : F \in \mathcal{F}, \phi \in \Phi \,\right].
\end{align}
By Theorem~\ref{thm:invariance}, for any $\mathcal{F}$ and $\Phi$, the $(\mathcal{F}, \Phi)$-convolution is invariant under the isomorphism.
Hence, we propose to use $(\mathcal{F}, \Phi)$-convolution as a embedding of graphs.


\subsection{$(F,\phi)$-Homomorphism Number as Embedding}

Let $\Phi$ be a set of continuous functions.
As same as the featureless case, we propose to use the $(\mathcal{F}, \Phi)$-homomorphism numbers as an embedding.
We say that two featured graphs $(G_1, x_1)$ and $(G_2, x_2)$ are \emph{$(\mathcal{F}, \Phi)$-indistinguishable} if $\mathrm{hom}(F, \phi, (G_1, x_1)) = \mathrm{hom}(F, \phi, (G_2, x_2))$ for all $F \in \mathcal{F}$ and $\phi \in \Phi$.
A function $f$ is \emph{$(\mathcal{F}, \Phi)$-invariant} if $f(G_1, x_1) = f(G_2, x_2)$ for all $(\mathcal{F}, \Phi)$-indistinguishable $(G_1, x_1)$ and $(G_2, x_2)$.

\subsection{Universality Theorem}

The challenge in proving the universality theorem for the featured setting is similar to the featureless case, which is the difficulty of the topological space. 
We consider the quotient space of graphs with respect to $(\mathcal{F}, \Phi)$-indistinguishability.
Our goal is to prove this space is completed to a compact Hausdorff space.

With a slight abuse of notation, consider a function $\iota$ that maps a vertex featured graph $(G, x)$ to a $|\Phi|$-dimensional vector $[ (G, \phi(x) : \phi \in \Phi ] \in (\mathcal{G}/\mathcal{F})^\Phi$ where each coordinate is an equivalence class of $\mathcal{F}$-indistinguishable graphs.
This space has a bijection to the quotient space by $(\mathcal{F}, \Phi)$-indistinguishability.
Each coordinate of the $|\Phi|$-dimensional space is completed to a compact Hausdorff space~\cite{borgs2008convergent}.
Therefore, by the Tychonoff product theorem~\cite{hart2003encyclopedia}, the $|\Phi|$-dimensional space is compact.
The bijection between the quotient space shows the quotient space is completed by a compact Hausdorff space.
We denote this space by $\overline{\mathcal{G}}$. Under this space, we have the following result.
\begin{theorem}
Any continuous $(\mathcal{F}, \Phi)$-invariant function $\overline{\mathcal{G}} \to \mathbb{R}$ is arbitrary accurately approximated by a polynomial of $(G, x) \mapsto t(F, (G, \phi(x)))$.
\end{theorem}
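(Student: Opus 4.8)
The plan is to replicate the strategy of Theorem~\ref{thm:universality} (the featureless universality result) in the featured setting, applying the Stone--Weierstrass theorem (Theorem~\ref{thm:stone-weierestrass}) to the completed quotient space $\overline{\mathcal{G}}$ constructed in the preceding paragraph. That construction already supplies the one genuinely hard ingredient, namely that $\overline{\mathcal{G}}$ is compact Hausdorff: the map $\iota$ embeds the $(\mathcal{F},\Phi)$-quotient into a product of featureless quotients $(\mathcal{G}/\mathcal{F})^{\Phi}$, each factor is completed to a compact Hausdorff space via~\cite{borgs2008convergent}, and the Tychonoff product theorem~\cite{hart2003encyclopedia} transfers compactness through the bijection. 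Taking this for granted, the proof of the theorem itself reduces to checking the two hypotheses of Stone--Weierstrass.

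First I would record that $(\mathcal{F},\Phi)$-indistinguishability is a closed equivalence relation on $\overline{\mathcal{G}}$, so the quotient carries the quotient topology and remains compact Hausdorff. By the universal property of the quotient topology, any continuous $(\mathcal{F},\Phi)$-invariant function $f \colon \overline{\mathcal{G}} \to \mathbb{R}$ factors uniquely through a continuous function on the quotient, reducing the approximation problem to approximating continuous functions on a compact Hausdorff space.

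Second I would verify that the candidate family $A = \{ (G,x) \mapsto t(F,(G,\phi(x))) : F \in \mathcal{F},\ \phi \in \Phi \}$ separates the points of the quotient. This is essentially immediate: on the completed space the indistinguishability relation is realized through homomorphism \emph{densities} (exactly as in the featureless Theorem~\ref{thm:universality}, where $t$ replaces $\mathrm{hom}$), so two points lie in distinct quotient classes precisely when some pair $(F,\phi)$ gives different values of $t(F,(G,\phi(x)))$; hence $A$ separates by construction. With separation and compact Hausdorffness in hand, Stone--Weierstrass yields that polynomials in the elements of $A$ are dense in the continuous functions on the quotient, and pulling back along $\iota$ gives the claimed approximation of $f$.

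The main obstacle I anticipate is not the Stone--Weierstrass bookkeeping but establishing that each $t(F,(G,\phi(x)))$ is a \emph{continuous} function on $\overline{\mathcal{G}}$ --- i.e.\ that the completion of the featured graph space was carried out with respect to a metric (a cut-distance-type distance, lifted through $\iota$ to the product) under which these weighted densities extend continuously to the limit points. This rests on the continuity of homomorphism densities in the cut distance from graphon theory, combined with the continuity of $\phi$ and the compactness of the feature range $[0,1]^p$; verifying this continuity rigorously on the product space, and confirming that it is precisely the family $A$ (rather than some strictly larger algebra) that furnishes the separation, is the delicate part. Once continuity and separation are settled, the conclusion follows as above.
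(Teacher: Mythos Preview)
Your proposal is correct and follows essentially the same route as the paper: compactness of $\overline{\mathcal{G}}$ is taken from the preceding Tychonoff construction, separation is read off from the definition of $(\mathcal{F},\Phi)$-indistinguishability, and Stone--Weierstrass finishes the argument. Your write-up is in fact more careful than the paper's three-line proof, since you flag the continuity of the densities on the completion as something that needs checking---the paper leaves this implicit.
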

\begin{proof}
The space $\overline{\mathcal{G}}$ is compact by construction.
The separability follows from the definition of $(\mathcal{F}, \Phi)$-invariant.
Therefore, by the Stone--Weierstrass theorem, we complete the proof.
\end{proof}

The strength of an embedding is characterized by the \emph{separability}.

\begin{lemma}
\label{lem:separability}
Let $\mathcal{F}$ be the set of all simple graphs and $\Phi$ be the set of all continuous functions from $[0,1]^p$ to $[0,1]$.
Then, $(G, x) \mapsto \mathrm{hom}(\mathcal{F}, G, x; \Phi)$ is injective.
\end{lemma}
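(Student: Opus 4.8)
The plan is to read the claim as a \emph{separability} statement: the map fails to be injective precisely when two non-isomorphic vertex-featured graphs $(G_1, x_1)$ and $(G_2, x_2)$ are $(\mathcal{F}, \Phi)$-indistinguishable, so it suffices to show that any such pair must in fact be isomorphic. Since $\mathcal{F}$ is all simple graphs and $\Phi$ is all continuous maps $[0,1]^p \to [0,1]$, this is exactly the generalized Lov\'asz statement established earlier in this section, and I would reuse its mechanism, spelling the argument out in two stages: first recover the underlying unlabeled graph, then recover the featuring.

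First I would recover the structure. Taking the constant $\phi \equiv 1 \in \Phi$ reduces the weighted homomorphism number to the ordinary one, $\mathrm{hom}(F, G_i, x_i; 1) = \mathrm{hom}(F, G_i)$, so the hypothesis yields $\mathrm{hom}(F, G_1) = \mathrm{hom}(F, G_2)$ for every simple $F$. By Theorem~\ref{thm:lovasz1967operations}, $G_1$ and $G_2$ are isomorphic as unlabeled graphs, so I may fix one underlying graph $G$ and view $x_1, x_2$ as two featurings of $G$. Next I would recover the features: order the vertices of each copy by the lexicographic order of their features and align them greedily so that the induced featured prefixes agree as long as possible. Supposing a first discrepancy occurs at a vertex $u$ with value $x_1(u)$, I would choose $\phi$ to be (a continuous approximation of) the indicator of $\{ z : z \le_{\mathrm{lex}} x_1(u) \}$, so that $\mathrm{hom}(F, G_i, x_i; \phi)$ counts homomorphisms into the weighted subgraph supported on the vertices with feature at most $x_1(u)$. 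Equality of these numbers for all $F$, via the weighted Lov\'asz theorem (Theorem~\ref{thm:freedman2017reflection}), would force the two weighted subgraphs to agree after twin-reduction and removal of zero-weight vertices, contradicting the minimality of the alignment. Hence no discrepancy exists and $x_1, x_2$ coincide under an isomorphism of $G$, i.e. $(G_1, x_1) \cong (G_2, x_2)$.

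The main obstacle is the feature-recovery step rather than the structural one, and two points need care. First, Theorem~\ref{thm:freedman2017reflection} only identifies weighted graphs up to twin-reduction and deletion of zero-weight vertices; I would argue this is harmless, because the reduction collapses only vertices that are simultaneously twins \emph{and} carry the same feature value, and any such pair is interchangeable by an automorphism of the featured graph, so the reduced form still determines $(G, x)$ up to isomorphism. Sweeping the threshold in $\phi$ across all (finitely many) distinct lexicographic feature values, and comparing consecutive thresholds, then isolates the induced weighted structure at each feature level, which is what pins down the labeled graph rather than a merely reduced form. Second, the separating $\phi$ I want is a discontinuous indicator, whereas $\Phi$ consists of continuous maps into $[0,1]$; I would replace it by a continuous approximation and use that $\mathrm{hom}(F, G_i, x_i; \phi)$ depends continuously on $\phi$ and that the finiteness of the feature set makes the relevant thresholds isolated, so a sufficiently sharp approximation still separates the two graphs. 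Making this limiting argument quantitative—guaranteeing that the strict inequality survives the continuous smoothing—is the most delicate part, and is where I would spend the bulk of the technical effort.
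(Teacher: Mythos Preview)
Your proposal is correct and follows essentially the same route as the paper's own proof: a two-stage argument that first uses the constant function $\phi\equiv 1$ together with Theorem~\ref{thm:lovasz1967operations} to pin down the underlying graph, and then uses a (continuously approximated) lexicographic threshold indicator to separate the featurings, arguing via the induced weighted subgraphs. If anything, you are more careful than the paper about the twin-reduction caveat from Theorem~\ref{thm:freedman2017reflection} and about why the continuous smoothing preserves the strict inequality; the paper's version handles these points more tersely.
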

\begin{proof}
Let $(G, x)$ and $(G', y)$ be two non-isomorphic vertex-featured graphs.
We distinguish these graphs by the homomorphism convolution.

If $G$ and $G'$ are non-isomorphic, 
by \cite{lovasz1967operations}, $\mathrm{hom}(\mathcal{F}, G, x; 1) \neq \mathrm{hom}(\mathcal{F}, G', y; 1)$ where $1$ is the function that takes one for any argument.

Now we consider the case that $G = G'$.
Let $\{1, \dots, n\}$ be the set of vertices of $G$.
Without loss of generality, we assume $x(1) \le x(2) \le \dots$ where $\le$ is the lexicographical order.
Now we find a permutation $\pi$ such that $G = G^\pi$ and $y(\pi(1)), y(\pi(2)), \dots$ are lexicographically smallest.
Let $u \in \{1, \dots, n\}$ be the smallest index such that $x(u) \neq y(u)$.
By the definition, $x(u) \le y(u)$.
We choose $\psi$ by 
\begin{align}
    \psi(x) = \begin{cases} 1, & x \le x(u), \\ 0, & \text{otherwise}. \end{cases}
\end{align}
Then, there exists $F \in \mathcal{F}$ such that $\mathrm{hom}(F, G, x; \psi) \neq \mathrm{hom}(F, G, y; \psi)$ because the graphs induced by $\{1, \dots, k\}$ and $\{\pi(1), \dots, \pi(k)\}$ are non-isomorphic because of the choice of $\pi$.

Now we approximate $\psi$ by a continuous function $\phi$.
Because $(F, \phi)$-convolution is continuous in the vertex weights (i.e., $\phi(x(u))$), by choosing $\phi$ sufficiently close to $\psi$, we get $\mathrm{hom}(F, G, x; \phi) \neq \mathrm{hom}(F, G, y; \phi)$.
\end{proof}

We say that a sequence $(G_i, x_i)$ \, ($i = 1, 2, \dots$) of featured graphs is an \emph{$(\mathcal{F}, \Phi)$-convergent} if for each $F \in \mathcal{F}$ and $\phi \in \Phi$ the sequence $\mathrm{hom}(F, G_i, x_i; \phi)$ \, ($i = 1, 2, \dots$) is a convergent in $\mathbb{R}$.
A function $f \colon (G, x) \mapsto f(G, x)$ is \emph{$(\mathcal{F}, \Phi)$-continuous} if for any $(\mathcal{F}, \Phi)$-convergent $(G_i, x_i)$ \, ($i = 1, 2, \dots$), the limit $\lim_{i \to \infty} f(G_i, x_i)$ of the function exists and its only depends on the limits $\lim_{i\to \infty} \mathrm{hom}(F, G_i, x_i, \phi)$ of the homomorphism convolutions for all $F \in \mathcal{F}$ and $\phi \in \Phi$.

Now we prove the universality theorem.
Let $\mathcal{H}$ be a dense subset of the set of continuous functions, e.g., the set of polynomials or the set of functions represented by a deep neural network.
We define $\mathcal{H}(\mathcal{G}; \mathcal{F}, \Phi)$ by 
\begin{align}
    &\mathcal{H}(\mathcal{G}; \mathcal{F}, \Phi) = \nonumber \\
    &\left\{ \sum_{F \in \mathcal{F}, \phi \in \Phi} h_{F, \phi}(\mathrm{hom}(F, \cdot; \phi) : h_{F, \phi} \in \mathcal{H} \right\}
\end{align}
where the argument of the function is restricted to $\mathcal{G}$.
This is the set of functions obtained by combining universal approximators in $\mathcal{H}$ and the homomorphism convolutions $\mathrm{hom}(F, G, x, \phi)$ for some $F \in \mathcal{F}$ and $\phi \in \Phi$.
Let $\mathcal{G}$ be a set of graphs, and let $C(\mathcal{G}; \mathcal{F}, \Phi)$ be the set of $(\mathcal{F}, \Phi)$-continuous functions defined on $\mathcal{G}$.
Then, we obtain the following theorem.
\begin{theorem}[Universal Approximation Theorem]
\label{thm:universality2}
Let $\mathcal{G}$ be a compact set of graphs whose number of vertices are bounded by a constant.
Then, $\mathcal{H}(\mathcal{G}; \mathcal{F}, \Phi)$ is dense in $C(\mathcal{G}; \mathcal{F}, \Phi)$.
\end{theorem}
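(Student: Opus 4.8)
The plan is to deduce the statement from the Stone--Weierstrass theorem (Theorem~\ref{thm:stone-weierestrass}) applied on the image of the homomorphism-convolution embedding. First I would fix the embedding $\Psi \colon (G,x) \mapsto \big(\mathrm{hom}(F, G, x; \phi)\big)_{F \in \mathcal{F}, \phi \in \Phi}$ into the product space $\mathbb{R}^{\mathcal{F}\times\Phi}$. Since $\mathcal{G}$ is compact by hypothesis and each coordinate of $\Psi$ is continuous in the sense of $(\mathcal{F},\Phi)$-convergence, $\Psi$ is continuous and its image $K = \Psi(\mathcal{G})$ is compact Hausdorff (a continuous image of a compact set, sitting inside a Hausdorff product). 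By the very definition of $(\mathcal{F},\Phi)$-continuity, every $f \in C(\mathcal{G};\mathcal{F},\Phi)$ factors as $f = g \circ \Psi$ for a unique continuous $g \colon K \to \mathbb{R}$; hence it suffices to approximate continuous functions on $K$ by the coordinate-wise combinations that constitute $\mathcal{H}(\mathcal{G};\mathcal{F},\Phi)$.

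Next I would verify the two hypotheses of Stone--Weierstrass for the closure of $\mathcal{H}(\mathcal{G};\mathcal{F},\Phi)$ on $K$. Separation of points is immediate from Lemma~\ref{lem:separability}: with $\mathcal{F}$ the set of all simple graphs and $\Phi$ all continuous maps $[0,1]^p\to[0,1]$, the embedding $\Psi$ is injective, so distinct points of $K$ differ in some coordinate $z_{F,\phi}:=\mathrm{hom}(F,\cdot;\phi)$, and taking $h_{F,\phi}=\mathrm{id}\in\mathcal{H}$ separates them. Constants are available, either because $\mathcal{F}$ contains the empty pattern (for which $\mathrm{hom}(\emptyset,\cdot;\phi)\equiv 1$) or because $\mathcal{H}$ contains the constants. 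The remaining, and genuinely substantive, requirement is closure under multiplication.

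For products I would exploit the elementary disjoint-union identity
\begin{equation}
\mathrm{hom}(F_1 \sqcup F_2, G, x; \phi) = \mathrm{hom}(F_1, G, x; \phi)\,\mathrm{hom}(F_2, G, x; \phi),
\end{equation}
valid because a homomorphism out of a disjoint union is precisely a pair of homomorphisms out of the components. Since $\mathcal{F}$ is closed under disjoint unions, the product of two coordinates that share the \emph{same} encoding $\phi$ is again a coordinate; combined with the density of $\mathcal{H}$ in the continuous functions (so that, without loss of generality, the $h_{F,\phi}$ may be taken to be polynomials), this shows that any polynomial in finitely many coordinates $z_{F,\phi}$ \emph{with a common $\phi$} lies in the closure of $\mathcal{H}(\mathcal{G};\mathcal{F},\Phi)$.

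The step I expect to be the main obstacle is closure under products of coordinates carrying \emph{different} encodings, $z_{F_1,\phi_1}\,z_{F_2,\phi_2}$ with $\phi_1\neq\phi_2$: the identity above no longer collapses such a product to a single convolution, since a disjoint union can only be decorated by one $\phi$ under the present definition. I would handle this either by enlarging the convolution to admit vertex-dependent encodings---so that $F_1\sqcup F_2$ may carry $\phi_1$ on the first block and $\phi_2$ on the second, making every cross term a genuine coordinate---or by arguing that on the compact image $K$ these cross terms are already determined by the single-$\phi$ coordinates and hence lie in the closure. Once all pairwise products of coordinates are accounted for, $\mathcal{H}(\mathcal{G};\mathcal{F},\Phi)$ generates a point-separating subalgebra containing the constants, and Theorem~\ref{thm:stone-weierestrass} gives density in $C(K)=C(\mathcal{G};\mathcal{F},\Phi)$; replacing the polynomial $h_{F,\phi}$ by arbitrary elements of the dense family $\mathcal{H}$ in the final step incurs only an additional uniform error, completing the argument.
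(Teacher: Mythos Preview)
Your outline coincides with the paper's: compactness from the bounded-size hypothesis, separation from Lemma~\ref{lem:separability}, and then Stone--Weierstrass. The paper's proof is three sentences and does not go beyond this; in particular it never mentions the disjoint-union identity or the question of whether $\mathcal{H}(\mathcal{G};\mathcal{F},\Phi)$ is closed under products.

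The cross-$\phi$ obstruction you isolate is therefore not a defect of your plan relative to the paper---it is a gap the paper's argument also leaves open. As written, $\mathcal{H}(\mathcal{G};\mathcal{F},\Phi)$ consists of \emph{sums} $\sum_{F,\phi} h_{F,\phi}\bigl(\mathrm{hom}(F,\cdot;\phi)\bigr)$ of one-variable functions of single coordinates, whereas the version of Stone--Weierstrass stated as Theorem~\ref{thm:stone-weierestrass} delivers density for the full polynomial \emph{algebra} generated by the coordinate functions. Your disjoint-union identity collapses same-$\phi$ monomials back into single coordinates, but a mixed product $z_{F_1,\phi_1}\,z_{F_2,\phi_2}$ with $\phi_1\neq\phi_2$ is not of the form $h(z_{F,\phi})$ for any single pair $(F,\phi)$, so it does not obviously lie in the additive family. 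Your first proposed fix (allow the pattern $F_1\sqcup F_2$ to carry different encodings on its components) would resolve this cleanly but amounts to changing the definition of the convolution; your second (mixed terms are already determined by single-$\phi$ coordinates on the compact image) is not obviously true and would need an argument. In short, you have reproduced the paper's strategy and, by taking it more seriously, surfaced a point the paper's proof does not address.
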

\begin{proof}
Because the number of vertices are bounded, the space of converging sequences is identified as $\mathcal{G}$.
Therefore, this space is compact Hausdorff.
The separability is proved in Lemma~\ref{lem:separability}.
Hence, we can use the Stone--Weierstrass theorem to conclude this result.
\end{proof}

\subsection{The Choice of $\mathcal{F}$ and $\Phi$}

In an application, we have to choose $\mathcal{F}$ and $\Phi$ appropriately.
The criteria of choosing them will be the same as with non-featured case: Trade-off between \emph{representability} and \emph{efficiency}.
Representability requires that $(\mathcal{F}, \Phi)$-convolutions can separate the graphs in which we are interested in.
Efficiency requires that $(\mathcal{F}, \Phi)$-convolutions must be efficiently computable. This trivially limits both $\mathcal{F}$ and $\Phi$ as finite sets.

The choice of $\Phi$ will depend on the property of the vertex features.
We will include the constant function $1$ if the topology of the graph is important.
We will also include the $i$-th component of the arguments. 
If we know some interaction between the features is important, we can also include the cross terms.

The choice of $\mathcal{F}$ relates with the topology of the graphs of interest.
If $\Phi = \{ 1 \}$ where $1$ is the constant function, the homomorphism convolution coincides with the homomorphism number (Table~\ref{tab:F-indistinguishable}).

Here, we focus on the efficiency.
In general, computing $\mathrm{hom}(F, G, x, \phi)$ is \#P-hard. 
However, it is computable in polynomial time if $F$ has a bounded tree
The \emph{treewidth} of a graph $F$, denoted by $\mathrm{tw}(F)$, is a graph parameter that measures the tree-likeness of the graph. 
The following result holds.
\begin{theorem}
$\mathrm{hom}(F, G, x; \phi)$ is computable in $|V(G)|^{\text{tw}(F) + 1}$ time, $\mathrm{tw}(F)$ is the treewidth of $F$.
\end{theorem}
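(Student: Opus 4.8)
The plan is to generalize the standard tree-decomposition dynamic program behind the unweighted result of \cite{diaz2002counting} cited above so that it carries the per-vertex weights $\phi(x(\cdot))$. The crucial structural observation is that the weight $\prod_{u \in V(F)} \phi(x(\pi(u)))$ factorizes into terms, each depending only on the image of a single vertex $u \in V(F)$ under $\pi$. This \emph{vertex-local} factorization is exactly what lets the weight be folded into the recursion without inflating the asymptotic running time beyond the unweighted case.

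First I would fix a tree decomposition $T = (V(T), E(T))$ of $F$ with mapping $B$ whose maximum bag size is $\mathrm{tw}(F) + 1$, and convert it into a nice (rooted) tree decomposition with the usual introduce, forget, and join node types; this conversion preserves the width and yields $O(|V(F)|)$ nodes. For each node $t$ and each partial assignment $\psi \colon B(t) \to V(G)$, I would define a table entry $D_t(\psi)$ as the sum, over all extensions $\hat\psi$ of $\psi$ to the set of $F$-vertices appearing in the subtree rooted at $t$, of the product of the edge-preservation indicators for the edges of $F$ covered within that subtree and the weights $\phi(x(\hat\psi(u)))$ for each vertex $u$ already \emph{forgotten} in the subtree. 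Each bag admits at most $|V(G)|^{\mathrm{tw}(F)+1}$ such assignments.

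Then I would specify the recurrences. At an introduce node, the new vertex ranges over all $|V(G)|$ candidate images subject to the edge constraints with the remaining bag vertices; at a forget node, the forgotten vertex is summed out and its weight $\phi(x(\cdot))$ is multiplied in at precisely that step, so each vertex's weight is counted exactly once; at a join node the two child tables are combined entrywise on matching assignments. Reading $D_r(\emptyset)$ at the root $r$ (after accounting for the weights of the root-bag vertices) yields $\mathrm{hom}(F, G, x; \phi)$. The cost is dominated by the join and forget operations, each touching $O(|V(G)|^{\mathrm{tw}(F)+1})$ assignments with a bounded number of arithmetic operations per assignment; with $O(|V(F)|)$ nodes the total running time is $O(|V(F)| \cdot |V(G)|^{\mathrm{tw}(F)+1})$, matching the claimed bound.

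The main obstacle is purely the bookkeeping that charges each weight $\phi(x(\pi(u)))$ to exactly one node of the decomposition, so that it is neither omitted nor double-counted. Here condition (3) of a tree decomposition is decisive: since the occurrences of each vertex $u$ form a connected subtree of $T$, each vertex is forgotten exactly once, which gives a clean and unambiguous place to inject its weight. Aside from this accounting, correctness follows by the same induction over the tree decomposition as in the unweighted setting, and the complexity is identical because the weights only contribute a bounded-cost multiplication per table entry.
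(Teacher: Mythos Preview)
Your proposal is correct. Note, however, that the paper does not actually supply a proof of this theorem: it is stated without argument, evidently intended as an immediate extension of the unweighted bound of \cite{diaz2002counting} quoted earlier in Section~\ref{sec:complexity}. Your write-up therefore goes well beyond what the paper does, giving the full nice-tree-decomposition dynamic program and, in particular, the clean accounting device of injecting each factor $\phi(x(\pi(u)))$ at the unique forget node for $u$. This is the standard way to carry multiplicative vertex weights through the D\'{\i}az--Serna--Thilikos recursion, and your observation that property~(3) of a tree decomposition guarantees each vertex is forgotten exactly once is precisely the point that makes the bookkeeping sound. One minor remark: you state the bound as $O(|V(F)|\cdot |V(G)|^{\mathrm{tw}(F)+1})$, which is the honest cost of the DP; the paper (like the unweighted statement it cites) suppresses the dependence on $|V(F)|$, treating it as a constant relative to $|V(G)|$.
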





\section{Experimental results}\label{sec:exp}

\begin{table}[!ht]
  \vspace{-1em}
  \caption{Classification accuracy over 10 experiments}
  \label{tab:results}
   \begin{subtable}{.48\textwidth}
    \caption{Synthetic datasets}
   \resizebox{\textwidth}{!}{
    \begin{tabular}[C]{lccc}
\specialrule{.1em}{.05em}{.05em} 
\textsc{Methods} & CSL & BIPARTITE & PAULUS25 \\
\specialrule{.1em}{.05em}{.05em} 
\multicolumn{3}{l}{\emph{Practical models}} \\
\hline
GIN & 10.00 $\pm$ 0.00 & 55.75 $\pm$ 7.91 & 7.14 $\pm$ 0.00  \\
GNTK & 10.00 $\pm$ 0.00 & 58.03 $\pm$ 6.84 & 7.14 $\pm$ 0.00 \\
\specialrule{.1em}{.05em}{.05em} 
\multicolumn{3}{l}{\emph{Theory models}} \\
\hline
Ring-GNN & 10$\sim$80 $\pm$ 15.7 & 55.72 $\pm$ 6.95 & 7.15 $\pm$ 0.00\\
GHC-Tree   & 10.00 $\pm$ 0.00 & 52.68 $\pm$ 7.15 & 7.14 $\pm$ 0.00 \\
GHC-Cycle   & \textbf{100.0 $\pm$ 0.00} & \textbf{100.0 $\pm$ 0.00} & 7.14 $\pm$ 0.00 \\
\hline
\end{tabular}}
\vspace{1em}
  \end{subtable}
  \centering
  \begin{subtable}{.48\textwidth}
  \caption{Benchmark datasets}
    \resizebox{\textwidth}{!}{
    \begin{tabular}[C]{l c c c}
\specialrule{.1em}{.05em}{.05em} 
\textsc{Methods} & MUTAG & IMDB-BIN & IMDB-MUL \\
\specialrule{.1em}{.05em}{.05em} 
\multicolumn{3}{l}{\emph{Practical models}} \\
\hline
GNTK & 89.46 $\pm$ 7.03 & 75.61 $\pm$ 3.98 & 51.91 $\pm$ 3.56\\
GIN  & 89.40 $\pm$ 5.60 & 70.70 $\pm$ 1.10 & 43.20 $\pm$ 2.00 \\
PATCHY-SAN & 89.92 $\pm$ 4.50 & 71.00 $\pm$ 2.20 & 45.20 $\pm$ 2.80 \\
WL kernel & 90.40 $\pm$ 5.70 & 73.80 $\pm$ 3.90 & 50.90 $\pm$ 3.80\\
\specialrule{.1em}{.05em}{.05em} 
\multicolumn{3}{l}{\emph{Theory models}} \\
\hline
Ring-GNN & 78.07 $\pm$ 5.61 & 73.00 $\pm$ 5.40 & 48.20 $\pm$ 2.70\\
GHC-Tree  & 89.28 $\pm$ 8.26 & 72.10 $\pm$ 2.62 & 48.60 $\pm$ 4.40 \\
GHC-Cycles & 87.81 $\pm$ 7.46 & 70.93 $\pm$ 4.54 & 47.41 $\pm$ 3.67 \\
\specialrule{.1em}{.05em}{.05em} 
\end{tabular}}
\end{subtable}
\end{table}

\subsection{Classification models}\label{subsec:clf}

The realization of our ideas in Section~\ref{sec:graph_no_features} and Section~\ref{sec:graph_with_features} are called Graph Homomorphism Convolution (GHC-*) models (due to their resemblance to the $\mathcal{R}-$convolution~\cite{haussler1999convolution}). Here, we give specific formulations for two practical embedding maps: GHC-Tree and GHC-Cycle. These embedding maps are then used to train a classifier (Support Vector Machine). We report the 10-folds cross-validation accuracy scores and standard deviations in Table~\ref{tab:results}.

\paragraph{GHC-Tree} We let $\mathcal{F}_\text{tree(6)}$ to be all simple trees of size at most 6. 
Algorithm~\ref{alg:homtree} implements Equation~\ref{m:homlabel} for this case. 
Given $G$ and vertex features $x$, the $i$-th dimension of the embedding vector is 
\begin{align*}
    \text{GHC-Tree}(G)_i = \mathrm{hom}(\mathcal{F}_\text{tree(6)}[i], (G,x)).
\end{align*}

\paragraph{GHC-Cycle} We let $\mathcal{F}_\text{cycle(8)}$ to be all simple cycles of size at most 8. This variant of GHC cannot distinguish iso-spectral graphs. The $i$-th dimensional of the embedding vector is
\begin{align*}
    \text{GHC-Cycle}(G)_i = \mathrm{hom}(\mathcal{F}_\text{cycle(8)}[i], G).
\end{align*}

With this configuration, GHC-Tree($G$) has 13 dimensions and GHC-Cycle($G$) has 7 dimensions.

\paragraph{Other methods} To compare our performance with other approaches, we selected some representative methods. GIN~\cite{gin} and PATCHY-SAN~\cite{niepert2016learning} are representative of neural-based methods.
WL-kernel~\cite{shervashidze2011weisfeiler} is a widely used efficient method for graph classifications.
GNTK~\cite{gntk} is a recent neural tangent approach to graph classification. 
We also include results for Ring-GNN~\cite{chen2019equivalence} as this recent model used in theoretical studies performed well in the Circular Skip Links synthetic dataset~\cite{murphy19pooling}.
Except for setting the number of epochs for GIN to be 50, we use the default hyperparameters provided by the original papers. 
More details for hyperparamters tuning and source code is available in the Supplementary Materials. 

\subsection{Synthetic Experiments}\label{subsec:syn}

\paragraph{Bipartite classification} We generate a binary classification problem consisting of 200 graphs, half of which are random bipartite graphs with density $p=0.2$ and the other half are Erd\H{o}s-R\'{e}nyi graphs with density $p=0.1$. These graphs have from 40 to 100 vertices. According to Table~\ref{tab:F-indistinguishable}, GHC-Cycle should work well in this case while GHC-Tree can not learn which graph is bipartite. More interestingly, as shown in Table~\ref{tab:results}, other practical models also can not work with this simple classification problem due to their capability limitation (1-WL).

\paragraph{Circular Skip Links} We adapt the synthetic dataset used by \cite{murphy19pooling} and \cite{chen2019equivalence} to demonstrate another case where GIN, Relational Pooling~\cite{murphy19pooling}, and Order 2 G-invariant~\cite{maron2018invariant} do not perform well.
Circular Skip Links (CSL) graphs are undirected regular graphs with the same degree sequence (4's).
Since these graphs are not cospectral, GHC-Cycle can easily learn them with 100\% accuracy.
\citeauthor{chen2019equivalence} mentioned that the performance of GNN models could vary due to randomness (accuracies ranging from 10\% to 80\%). However, it is not the case for GHC-Cycle.
CSL classification results shows another benefit of using $F$ patterns as an inductive bias to implement a strong classifier without the need of additional features like Ring-GNN-SVD~\cite{chen2019equivalence}.

\paragraph{Paulus graphs} We prepare 14 non-isomorphic cospectral strongly regular graphs known as the Paulus graphs\footnote{https://www.distanceregular.org/graphs/paulus25.html} and create a dataset of 210 graphs belonging to 14 isomorphic groups.
This is a hard example because these graphs have exactly the same degree sequence and spectrum. 
In our experiments, no method achieves accuracy higher than random guesses (7.14\%). 
This is a case when exact isomorphism tests clearly outperform learning-based methods.
In our experiments, homomorphisms up to graph index 100 of NetworkX's graph atlas still fail to distinguish these isomorphic classes.
We believe further studies of this case could be fruitful to understand and improve graph learning.

\subsection{Benchmark Experiments}

We select 3 datasets from the TU Dortmund data collection~\cite{tud}: MUTAG dataset~\cite{debnath1991structure}, IMDB-BINARY, and IMDB-MULTI~\cite{yanardag2015deep}. 
These datasets represent with and without vertex features graph classification settings. 
We run and record the 10-folds cross-validation score for each experiment. 
We report the average accuracy and standard deviation of 10 experiments in Table~\ref{tab:results}. 
More experiments on other datasets in the TU Dortmund data collection, as well as the details of each dataset, are provided in the Appendix. 

\subsection{Running time}

Although homomorphism counting is \#P-complete in general, polynomial and linear time algorithms exist under the bounded tree-width condition~\cite{diaz2002counting}. 
Figure~\ref{fig:runtime} shows that our method runs much faster than other practical models.
The results are recorded from averaging total runtime in seconds for 10 experiments, each computes the 10-folds cross-validation accuracy score.
In principle, GHC can be linearly distributed to multiple processes to further reduce the computational time making it an ideal baseline model for future studies.

\begin{figure}
\begin{center}
   \includegraphics[width=\linewidth]{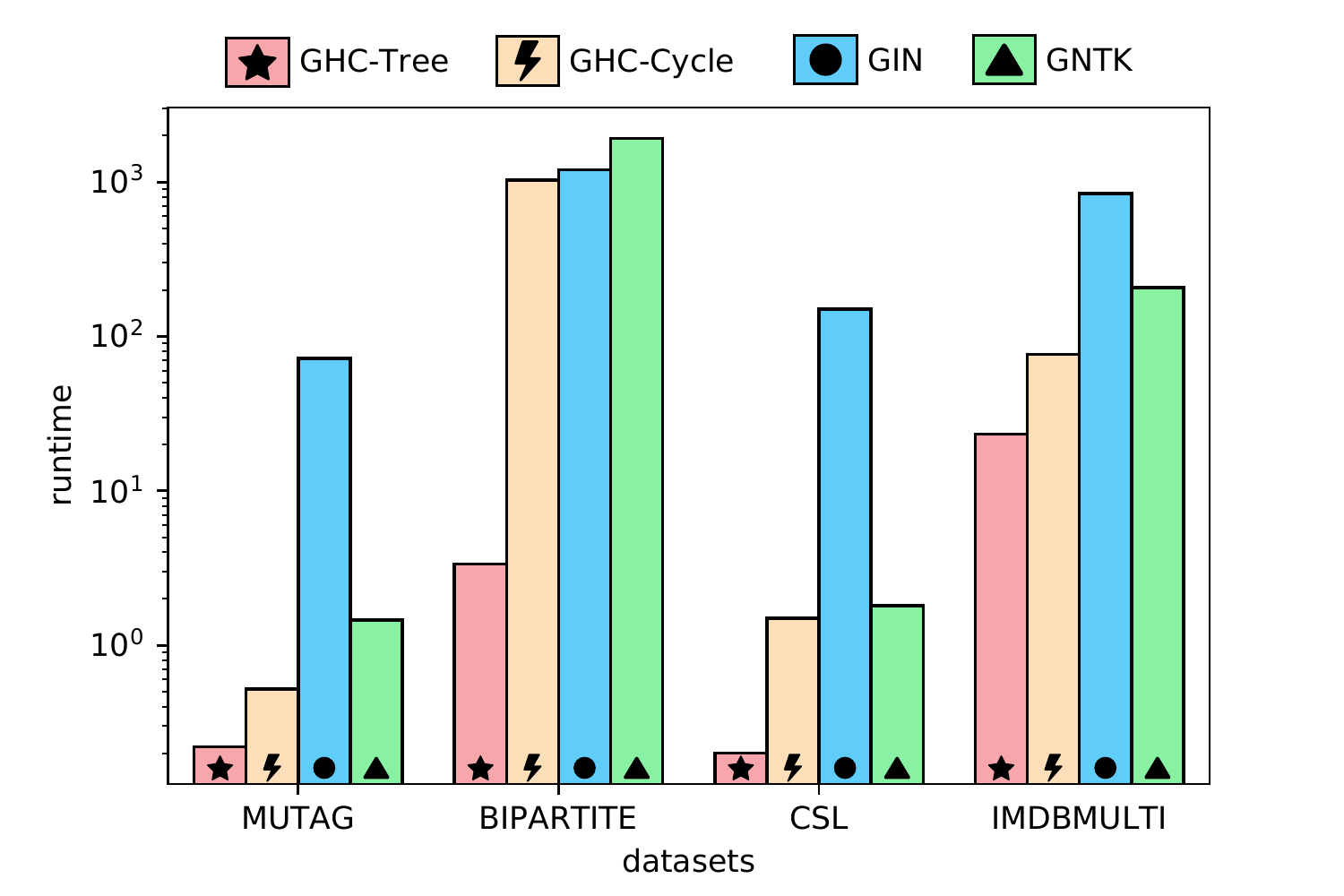}
\end{center}
   \caption{Runtime (sec) in log-scale for one 10-folds run}
\label{fig:runtime}
\end{figure}

\section{Conclusion}\label{sec:conclusion}
In this work we contribute an alternative approach to the question of quantifying a graph classification model's capability beyond the tensorization order and the Weisfeiler-Lehman isomorphism test. 
In principle, tensorized graph neural networks can implement homomorphism numbers, hence our work is in coherence with prior work.
However, we find that the homomorphism from $\mathcal{F}$ to $G$ is a more ``fine-grained'' tool to analyze graph classification problems as studying $\mathcal{F}$ would be more intuitive (and graph-specific) than studying the tensorization order. Since GHC is a more restricted embedding compared to tensorized graph neural networks such as the model proposed by~\cite{keriven2019universal}, the universality result of GHC can be translated to the universality result of any other model that has the capability to implement the homomorphism numbers.

Another note is about the proof for Theorem~\ref{thm:unbounded} (universality on unbounded graphs). 
In order to prove this result, we made an assumption about the topology of $f$ and also about the graph of interest belongs to the graphon space. 
While the graphon space is natural in our application to prove the universality, there are a few concerns.
First, we assumed that the graphons exist for graphs of interest.
However, it might not be true in general.
Second, graph limit theory is well-studied in dense graphs while sparse graph problems remain largely open.




\section*{Acknowledgement} 
HN is supported by the Japanese Government Scholarship (Monbukagakusho: MEXT SGU 205144). We would like to thank the anonymous reviewers whose comments have helped improving the quality of our manuscript. We thank Jean-Philippe Vert for the helpful discussion and suggestions on our early results. We thank Matthias Springer, Sunil Kumar Maurya, and Jonathon Tanks for proof-reading our manuscripts.

\bibliography{ghp}
\bibliographystyle{icml2020}
\section*{APPENDIX}
\label{sec:appendix}
\renewcommand{\thesection}{\Alph{section}}
\setcounter{section}{0}

We present additional information for the theoretical framework (Section~\ref{sec:graph_no_features} and Section~\ref{sec:graph_with_features}) and experimental settings (Section~\ref{sec:exp}) here. The Appendix is organized as follow:
\begin{itemize}
    \setlength{\itemsep}{0pt}
    \item Section~\ref{sec:a_exp} gives details of the configurations for GHC-$\ast$ and other GNNs.
    \item Section~\ref{sec:a_data} gives details of synthetic and real-world datasets used in this paper. We also provide some additional results on other real-world datasets.
\end{itemize}

\section{Implementation Details}
\label{sec:a_exp}

The source code for GHC is provided with this supplementary document.
The main implementation is in file \texttt{homomorphism.py}.
Aside from Algorithm~\ref{alg:homtree}, which can be implemented directly with numpy and run with networkx, other types of homomorphism counting are implemented with \texttt{C++} and called from \texttt{homomorphism.py}.
The implementation for general homomorphism is called \texttt{homlib}.
We include an instruction to install \texttt{homlib} in its \texttt{README.md}. 
All our experiments are run on a PC with the following specifications. Kernel: \texttt{5.3.11-arch1-1}; CPU: Intel \texttt{i7-8700K (12) \@ 4.7 GHz}; GPU: NVIDIA GEFORCE GTX 1080 Ti 11GB; Memory: 64 GB. Note that GPU is only used for training GIN~\cite{gin}.

\paragraph{Benchmark Experiments} The main file to run experiment on real-world (benchmark) datasets is \texttt{tud.py}.
This is a simple classification problem where each graph in a dataset belongs to a single class.
While any other classifier can be used with GHC, we provide the implementation only for Support Vector Machines (\texttt{scikit-learn}) with
One-Versus-All multi-class algorithm.
We preprocess the data using the \texttt{StandardScaler} provided with \texttt{scikit-learn}. 
As described in the main part of this paper, we report the best 10-folds cross validation accuracy scores across different SVM configurations.
The parameter settings for this experiments are:
\begin{itemize}
    \setlength{\itemsep}{0pt}
    \item Homomorphism types: Tree, LabelTree (weighted homomorphism), and Cycle.
    \item Homomorphism size: 6 for trees and 8 for cycles. Theoretically, the homomorphism size increase implies performance increase, but in practice we observe no improvement in classification accuracy beyond size 6. The number of non-isomorphic trees of size $k$ is presented in Table~\ref{tab:trees-of-size-k}.
    \item SVC kernel: Radial Basis Function, Polynomial (max degree = 3).
    \item SVC regularization parameter (C): $20$ values in the log-space from $10^{-2}$ to $10^{5}$.
    \item SVC kernel coefficient (gamma): `scale' (\texttt{1 / (n\_features * X.var()})
\end{itemize}

\begin{table}[tb]
    \caption{The number of non-isomorphic trees of size $k$.\footnotemark}
    \label{tab:trees-of-size-k}
    \centering
    \begin{tabular}{c|ccccccc}
    \toprule
    $k$      & 2 & 3 & 4 & 5 & 6 & 7  & 8  \\
    \# trees & 1 & 1 & 2 & 3 & 6 & 11 & 23 \\ \midrule 
    $k$      & 9  & 10  & 11  & 12  & 13 & 14  & 15 \\
    \# trees & 47 & 106 & 235 & 551 & 1301 & 3159 & 7741 \\ 
    \bottomrule
    \end{tabular}
\end{table}

\footnotetext{\url{https://oeis.org/A000055}}
For other GNNs, we use the default hyperparameters used by the original paper.
To run GIN~\cite{gin}, we fix the number of epochs at 100 and enable the \texttt{use degree as tags} by default for all datasets.
We limit the number of threads used by GNTK~\cite{gntk} to 8.

\paragraph{Synthetic Experiments} The main file to run experiment on real-world (benchmark) datasets is \texttt{synthetic.py} and the implementation of synthetic datasets can be found in \texttt{utils.py}.
Since these experiments focus on the capability of GHC, we can achieve the best performance with just a simple classifier.
The parameter settings for this experiments are:
\begin{itemize}
    \setlength{\itemsep}{0pt}
    \item Homomorphism types: Tree and Cycle.
    \item Homomorphism size: 6 for trees and 8 for cycles.
    \item SVC kernel: Radial Basis Function.
    \item SVC regularization parameter (C): Fix at $1.0$.
    \item SVC kernel coefficient (gamma): Fix at $1.0$.
\end{itemize}

We provide in our source code helper functions which behave the same as the default dataloader used by GIN and GNTK implementations.
The external loaders are provided in \texttt{externals.py}.
Users can copy-paste (or import) these loader into the repository provided by GIN and GNTK to run our synthetic experiments.
The settings for other models are set as in the benchmark experiments.

\paragraph{Timing Experiments} We measure run-time using the \texttt{time} module provided with Python 3.7. 
The reported time in Figure~\ref{fig:runtime} is the total run-time (in seconds) including homomorphism time and prediction time for our model as well as kernel learning time and prediction time for others.

\section{Datasets}
\label{sec:a_data}

\begin{figure}
\begin{center}
   \includegraphics[width=\linewidth]{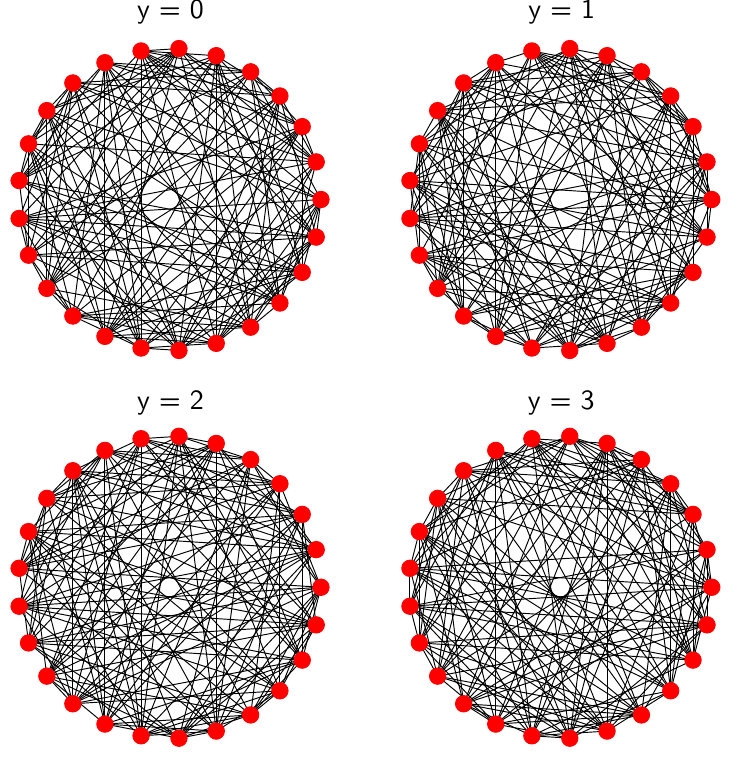}
\end{center}
   \caption{Four first isomorphic groups in Paulus25 dataset}
\label{fig:paulus}
\end{figure}

As other works in the literature, we use the TU Dortmund data collections~\cite{tud}.
The overview of these datasets are provided in Table~\ref{tab:datainfo}. 
We provide additional classification results for these datasets in Table~\ref{tab:results_lit_nonf_extra} and Table~\ref{tab:results_lit_f_extra}. 
We also provide some example of MUTAG data in Figure~\ref{fig:mutag}, four first isomorphic groups of PAULUS25 in Figure~\ref{fig:paulus}, plot of trees and cycles used in GHC-Tree (Figure~\ref{fig:trees6} and GHC-Cycles (Figure~\ref{fig:cycles8}).

\begin{figure*}
\begin{center}
   \includegraphics[width=\linewidth]{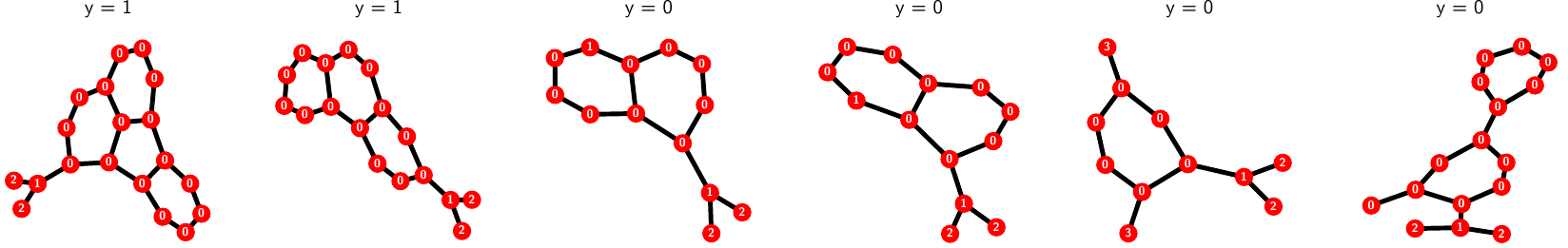}
\end{center}
   \caption{Example of MUTAG data}
\label{fig:mutag}
\end{figure*}

\begin{figure*}
\begin{center}
   \includegraphics[width=\linewidth]{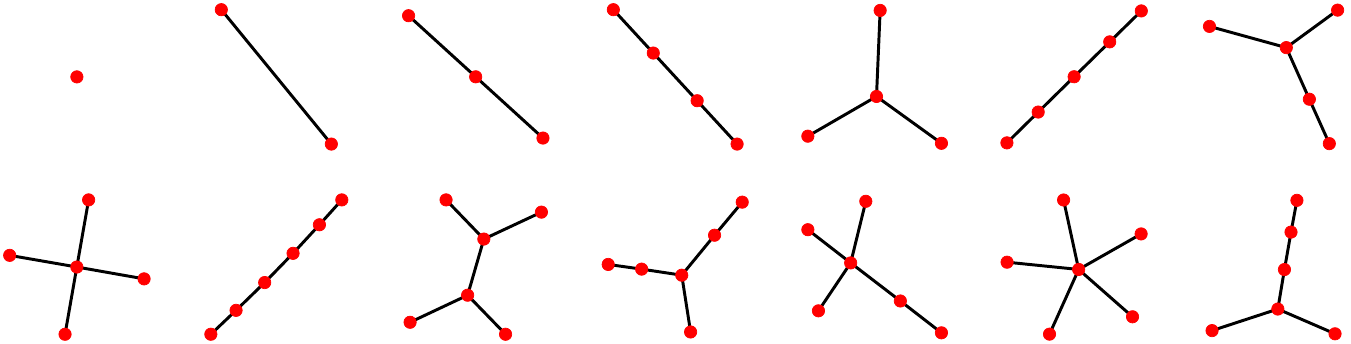}
\end{center}
   \caption{Elements of $\mathcal{F}_\mathrm{tree(6)}$}
\label{fig:trees6}
\end{figure*}

\begin{table}
\begin{center}
\begin{tabular}{l|rrrrr}
\textsc{Datasets} & $N$ & $\overline{n}$ & $\vert c \vert$ & $\mathcal{X}$ & $\mathcal{T}$\\
\midrule
MUTAG & 188 & 17.9 & 2 & no & yes\\
PTC-MR & 344 & 25.5 & 2 & no & yes\\
NCI1 & 4110 & 29.8 & 2 & no & yes\\
PROTEINS & 1113 & 39.1 & 2 & yes & yes\\
D\&D & 1178 & 284.3 & 2 & no & yes\\
BZR & 405 & 35.7 & 2 & yes & yes\\
RDT-BIN & 2000 & 429.6 & 2 & no & no\\
RDT-5K & 5000 & 508.5 & 5 & no & no\\ 
RDT-12K & 11929 & 391.4 & 11 & no & no\\
COLLAB & 5000 & 74.5 & 3 & no & no\\ 
IMDB-BIN & 1000 & 19.8 & 2 & no & no\\ 
IMDB-MUL & 1500 & 13.0 & 3 & no & no\\
Bipartite & 200 & 70.0 & 2 & no & no\\ 
CSL & 150 & 41.0 & 10 & no & no\\ 
Paulus 25 & 210 & 25.0 & 14 & no & no\\
\end{tabular}
\end{center}
\caption{Overview of the datasets in this paper. Here, $N$ denotes total number of graphs, $\overline{n}$ denotes the average number of nodes, $\vert c \vert$  denotes number of classes, $\mathcal{X}$ denotes if the dataset consists of vertex features, and $\mathcal{T}$ denotes if the dataset consists of vertex tags (or types).}
\label{tab:datainfo}
\end{table}

\begin{figure}
\begin{center}
   \includegraphics[width=\linewidth]{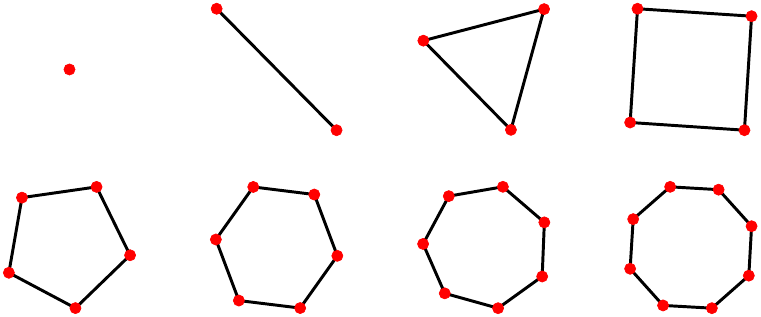}
\end{center}
   \caption{Elements of $\mathcal{F}_\mathrm{cycles(8)}$}
\label{fig:cycles8}
\end{figure}

\begin{table*}
\begin{center}
\begin{tabular}{|l|c|c|c|c|c|c|c|c|}
\hline
\multirow{2}{*}{\hspace{1em}\textsc{Methods}} & \multicolumn{6}{c|}{\textsc{Datasets}} \\\cline{2-7}
 & RDT-BIN & RDT-5K & RDT-12K & COLLAB & IMDB-BIN & IMDB-MUL \\
\hline
\multicolumn{5}{l}{\emph{Our experiments} (Average over 10 runs of stratified 10-folds CV)} \\
\hline
GHC-Tree   & 88.42 $\pm$ 2.05 & 52.98 $\pm$ 1.83 & 44.8 $\pm$ 1.00 & 75.23 $\pm$ 1.71 & 72.10 $\pm$ 2.62 & 48.60 $\pm$ 4.40 \\
GHC-Cycles & 87.61 $\pm$ 2.45 & 52.45 $\pm$ 1.24 & 40.9 $\pm$ 2.01 & 72.59 $\pm$ 2.02 & 70.93 $\pm$ 4.54 & 47.61 $\pm$ 3.67 \\
\hline
GIN & 74.10 $\pm$ 2.34 & 46.74 $\pm$ 3.07 & 32.56 $\pm$ 5.33 & 75.90 $\pm$ 0.81 & 70.70 $\pm$ 1.1 & 43.20 $\pm$ 2.00 \\
GNTK & - & - & - & 83.70 $\pm$ 1.00 & 75.61 $\pm$ 3.98 & 51.91 $\pm$ 3.56\\
\hline
\multicolumn{5}{l}{\emph{Literature} (One run of stratified 10-folds CV)} \\
\hline
GIN & 92.4 $\pm$ 2.5 & 57.5 $\pm$ 1.5 & - & 80.2 $\pm$ 1.9 & 75.1 $\pm$ 5.1 & 52.3 $\pm$ 2.8 \\
PATCHY-SAN & 86.3 $\pm$ 1.6 & 49.1 $\pm$ 0.7 & - & 72.6 $\pm$ 2.2 & 71.0 $\pm$ 2.2 & 45.2 $\pm$ 2.8 \\
WL kernel & 80.8 $\pm$ 0.4 & - & - & 79.1 $\pm$ 0.1 & 73.12 $\pm$ 0.4 & -\\
Graphlet kernel & 60.1 $\pm$ 0.2 & - & 31.8 & 64.7 $\pm$ 0.1 & - & -\\
AWL kernel & 87.9 $\pm$ 2.5 & 54.7 $\pm$ 2.9 & - & 73.9 $\pm$ 1.9 & 74.5 $\pm$ 5.9 & 51.5 $\pm$ 3.6 \\
WL-OA kernel & 89.3 & - & - & 80.7 $\pm$ 0.1 & - & -\\
WL-W kernel & - & - & - & - & 74.37 $\pm$ 0.83 & - \\
GNTK & - & - & - & 83.6 $\pm$ 1.0 & 76.9 $\pm$ 3.6 & 52.8 $\pm$ 4.6 \\
\hline
\end{tabular}
\end{center}
\caption{Graph classification accuracy (percentage) on popular non-vertex-featured benchmark datasets. This table provides the results obtained by averaging 10 times the 10-folds cross-validation procedure. Note that the results reported in the literature are run for only one 10-folds cross-validation. ``-'' denotes the result is not available or the experiment runs for more than 2 days (48 hours).}
\label{tab:results_lit_nonf_extra}
\end{table*}

\begin{table*}
\begin{center}
\begin{tabular}{|l|c|c|c|c|c|c|c|c|}
\hline
\multirow{2}{*}{\hspace{1.6em}\textsc{Methods}} & \multicolumn{6}{c|}{\textsc{Datasets}} \\\cline{2-7}
 & MUTAG & PTC-MR & NCI1 & PROTEINS & D\&D & BZR \\
\hline
\multicolumn{5}{l}{\emph{Our experiments} (Average over 10 runs of stratified 10-folds CV)} \\
\hline
GHC-Tree         & 89.28 $\pm$ 8.26 & 52.98 $\pm$ 1.83 & 48.8 $\pm$ 1.00 & 75.23 $\pm$ 1.71 & 72.10 $\pm$ 2.62 & 48.60 $\pm$ 4.40 \\
GHC-Cycle        & 87.81 $\pm$ 7.46 & 50.97 $\pm$ 2.13 & 47.4 $\pm$ 1.02 & 74.30 $\pm$ 1.93 & 70.10 $\pm$ 2.49 & 47.20 $\pm$ 3.84 \\
GHC-LabelTree  & 88.86 $\pm$ 4.82 & 59.68 $\pm$ 7.98 & 73.95 $\pm$ 1.99 & 73.27 $\pm$ 4.17 & 76.50 $\pm$ 3.15 & 82.82 $\pm$ 4.37 \\
\hline
GIN & 74.10 $\pm$ 2.34 & 46.74 $\pm$ 3.07 & 76.67 $\pm$ 1.16 & 75.9 $\pm$ 0.81 & 70.70 $\pm$ 1.1 & 43.20 $\pm$ 2.00 \\
GNTK & 89.65 $\pm$ 7.5 & 68.2 $\pm$ 5.8 & 85.0 $\pm$ 1.2 & 76.60 $\pm$ 5.02 & 75.61 $\pm$ 3.98 & 83.64 $\pm$ 2.95\\
\hline
\multicolumn{5}{l}{\emph{Literature} (One run of stratified 10-folds CV)} \\
\hline
GIN & 89.4 $\pm$ 5.6 & 64.6 $\pm$ 7.0 & 82.7 $\pm$ 1.7 & 76.2 $\pm$ 2.8 & - & -\\
PATCHY-SAN & 92.5 $\pm$ 4.2 & 60.0 $\pm$ 4.8 & 78.6 $\pm$ 1.9 & 75.9 $\pm$ 2.8 & 77.12 $\pm$ 2.41 & - \\
WL kernel & 90.4 $\pm$ 5.7 & 59.9 $\pm$ 4.3 & 86.0 $\pm$ 1.8 & 75.0 $\pm$ 3.1 & 79.78 $\pm$ 0.36 & 78.59 $\pm$ 0.63\\
Graphlet kernel & 85.2 $\pm$ 0.9 & 54.7 $\pm$ 2.0 & 70.5 $\pm$ 0.2 & 72.7 $\pm$ 0.6 & 79.7 $\pm$ 0.7 & -\\
AWL kernel & 87.9 $\pm$ 9.8 & - & - & - & - & -\\
WL-OA kernel & 84.5 $\pm$ 0.17 & 63.6 $\pm$ 1.5 & 86.1 $\pm$ 0.2 & 76.4 $\pm$ 0.4 & 79.2 $\pm$ 0.4 & -\\
WL-W kernel & 87.27 $\pm$ 1.5 & 66.31 $\pm$ 1.21 & 85.75 $\pm$ 0.25 & 77.91 $\pm$ 0.8 & 79.69 $\pm$ 0.50 & 84.42 $\pm$ 2.03 \\
GNTK & 90.00 $\pm$ 8.5 & 67.9 $\pm$ 6.9 & 84.2 $\pm$ 1.5 & 75.6 $\pm$ 4.2 & - & - \\
\hline
\end{tabular}
\end{center}
\caption{Graph classification accuracy (percentage) on popular vertex-featured (vertex-labeled) benchmark datasets. This table provides the results obtained by averaging 10 times the 10-folds cross-validation procedure. ``-'' denotes the result is not available in the literature or the experiment runs for more than 2 days (48 hours).}
\label{tab:results_lit_f_extra}
\end{table*}

\begin{remark}
We conjecture that the above result can be extended to the infinite graphs (say, graphons).
\end{remark}

\end{document}